\documentclass{article}

\usepackage{microtype}
\usepackage{graphicx}
\usepackage{subcaption}
\usepackage{booktabs} 
\usepackage{multirow}
\usepackage{xcolor}
\usepackage{enumitem}
\usepackage{hyperref}

\usepackage[preprint]{icml2026}

\usepackage{amsmath}
\usepackage{amssymb}
\usepackage{mathtools}
\usepackage{amsthm}

\usepackage{array}
\usepackage[capitalize,noabbrev]{cleveref}

\theoremstyle{plain}
\newtheorem{theorem}{Theorem}[section]

\theoremstyle{definition}

\theoremstyle{remark}

\usepackage[textsize=tiny]{todonotes}
\usepackage[table]{xcolor} 
\definecolor{myteal}{HTML}{287878}

\icmltitlerunning{ScDiVa: Masked Discrete Diffusion for Joint Modeling of Single-Cell Identity and Expression}

\begin{document}

\twocolumn[
  \icmltitle{ScDiVa: Masked Discrete Diffusion for \\ 
  Joint Modeling of Single-Cell Identity and Expression}



  \icmlsetsymbol{equal}{*}

\begin{icmlauthorlist}
    \icmlauthor{Mingxuan Wang}{gsai,stat}
    \icmlauthor{Cheng Chen}{cre}
    \icmlauthor{Gaoyang Jiang}{hust}
    \icmlauthor{Zijia Ren}{math}
    \icmlauthor{Chuangxin Zhao}{baai}
    \icmlauthor{Lu Shi\textsuperscript{*}}{cre}
    \icmlauthor{Yanbiao Ma\textsuperscript{*}}{gsai}  
  \end{icmlauthorlist}
    
    \icmlaffiliation{gsai}{Gaoling School of Artificial Intelligence, Renmin University of China, Beijing, China}
    
    \icmlaffiliation{stat}{School of Statistics, Renmin University of China, Beijing, China}
    
    \icmlaffiliation{cre}{CRE Life Institute, China}
    
    \icmlaffiliation{hust}{School of Computer Science and Technology, Huazhong University of Science and Technology, Wuhan, China}

    \icmlaffiliation{math}{School of Mathematics, Jilin University, Changchun, China}
    
    \icmlaffiliation{baai}{Beijing Academy of Artificial Intelligence,Beijing,China}
    
  \icmlcorrespondingauthor{Yanbiao Ma}{ybma1998@ruc.edu.cn}

  \icmlkeywords{Machine Learning, ICML}

  \vskip 0.3in
]



\printAffiliationsAndNotice{}  

\begin{abstract}
Single-cell RNA-seq profiles are high-dimensional, sparse, and unordered, causing autoregressive generation to impose an artificial ordering bias and suffer from error accumulation. To address this, we propose scDiVa, a masked discrete diffusion foundation model that aligns generation with the dropout-like corruption process by defining a continuous-time forward masking mechanism in token space. ScDiVa features a bidirectional denoiser that jointly models discrete gene identities and continuous values, utilizing entropy-normalized serialization and a latent anchor token to maximize information efficiency and preserve global cell identity. The model is trained via depth-invariant time sampling and a dual denoising objective to simulate varying sparsity levels while ensuring precise recovery of both identity and magnitude. Pre-trained on 59 million cells, scDiVa achieves strong transfer performance across major benchmarks, including batch integration, cell type annotation, and perturbation response prediction. These results suggest that masked discrete diffusion serves as a biologically coherent and effective alternative to autoregression.
\end{abstract}

\section{Introduction}

The rapid advancement of single-cell RNA sequencing (scRNA-seq)~\cite{regev2017human} has necessitated scalable ``Foundation Models'' capable of extracting universal cellular representations from massive global datasets. Inspired by Large Language Models (LLMs), Transformer-based methods such as scBERT \cite{yang2022scbert}, Geneformer \cite{theodoris2023transfer}, and scGPT \cite{cui2024scgpt} treat genes as discrete ``tokens'', demonstrating significant potential in cross-task transfer through self-supervised learning.

However, directly applying NLP paradigms to genomics faces a fundamental Structural Mismatch. First, gene expression profiles are high-dimensional, unordered multisets; Autoregressive (AR) models enforce an artificial sequential order that disrupts symmetric gene regulatory interactions \cite{cui2024scgpt}. Second, existing tokenization strategies struggle to balance sequence length with sparsity, often capturing binary gene presence but failing to reconstruct precise continuous expression intensities (numerical fidelity). Meanwhile, continuous generative models like scVI \cite{lopez2018deep} (VAEs) tend to produce over-smoothed results, and existing diffusion models \cite{luo2024scdiffusion} relying on Gaussian noise fail to model the intrinsic discrete event structure and stochastic dropout of single-cell data.

To address these ``sequence-set'' and ``discrete-continuous'' challenges, we propose Single-cell Masked Diffusion for Identity \& Value (\textbf{scDiVa}), a generative foundation model based on a Masked Discrete Diffusion framework \cite{austin2021structured}. Unlike traditional Gaussian-based methods, we establish a mathematical isomorphism between the forward diffusion process and sequencing ``technical dropout''. By utilizing an absorbing \texttt{[MASK]} state to simulate random signal loss, scDiVa jointly learns gene regulatory topology and quantitative expression manifolds within a unified and robust probabilistic framework.

ScDiVa incorporates Entropy-Normalized Serialization to maximize information density effectively handling long-tail sparsity. Furthermore, we design a Dual Denoising Loss that simultaneously constrains topological classification and dosage regression. This architecture leverages bidirectional non-causal context, achieving precise recovery of cellular states in both ``Rank'' and ``Value'' dimensions while eliminating artificial sequential bias.

Our contributions are threefold:
\vspace{-1mm}
\begin{itemize}[left=-0.5pt]
    \item We propose scDiVa, a foundation model grounded in masked discrete diffusion. By establishing an isomorphism between forward diffusion and biological dropout, it resolves inherent continuous modeling biases and ordering artifacts. Optimized with SwiGLU and RoPE, the framework aligns with physical data properties to robustly parse gene regulatory networks.
    
    \item We employ Entropy-Normalized Serialization to prioritize discriminative features amidst high-dimensional sparsity. Furthermore, a Dual Denoising Loss simultaneously optimizes gene identity and dosage, enabling high-fidelity recovery in Rank and Value dimensions.

    \item We introduce a Depth-Invariant Sampling strategy that models training as inverse physical sequencing. By treating diffusion steps as reciprocal sequencing depth, scDiVa projects cells onto a unified Canonical Latent Manifold, achieving intrinsic robustness to depth variations without \textbf{requiring any} external batch correction.
\end{itemize}

\section{Related Work}

\subsection{LLM-Based Single-Cell Pre-training }
Inspired by pre-training in Natural Language Processing (NLP), single-cell foundation models typically represent each cell as a sequence or a finite set of gene tokens. These models employ self-supervised objectives, such as masked prediction, to learn contextual dependencies and universal embeddings. For instance, scBERT is pre-trained on large-scale unlabeled scRNA-seq data, demonstrating advantages in few-shot fine-tuning and offering interpretability cues via attention weights~\cite{yang2022scbert}. Similarly, Geneformer emphasizes learning contextual dependencies across massive corpora and leverages transferability for diverse network biology predictions~\cite{theodoris2023machine,theodoris2023transfer}. Building on this, scGPT extends generative pre-training to large-scale single-cell and even multi-omics data, systematically validating its transfer efficacy in tasks such as integration and annotation~\cite{cui2024scgpt}. Meanwhile, scFoundation enhances universal cell and gene representations by utilizing a larger corpus and a more comprehensive gene space, followed by evaluation across multiple tasks~\cite{hao2024large}. A critical tension inherent in this approach lies in the fact that tokenization necessitates a complex trade-off among finite sequence length, extreme sparsity, and long-tail dynamic ranges. Consequently, these methods often excel at capturing the identity structure concerning which gene events occur, yet they require additional mechanisms to compensate for the numerical fidelity of continuous expression intensities.

\subsection{Continuous Probabilistic Representation Learning}
Another primary stream focuses on probabilistic generative modeling centered on continuous count distributions. A representative example is scVI, which unifies sequencing noise, batch effects, and latent space inference via Variational Autoencoders (VAEs), serving as a foundational framework for denoising and integration~\cite{lopez2018deep}. In multi-omics scenarios, totalVI further incorporates modalities such as RNA and proteins into a unified probabilistic model, demonstrating the transferability of combining continuous distribution modeling with a unified latent space~\cite{gayoso2021joint}. Concurrently, certain large-scale pre-training methods learn representations directly in the continuous domain for multi-task transfer. For instance, CellFM reports competitive results across various tasks, including perturbation-related ones, while emphasizing a scalable and efficient backbone and training scheme~\cite{zeng2025cellfm}. However, a common risk associated with continuous approaches is that if the objective function primarily encourages numerical reconstruction or regression, the model tends to heavily bias towards smooth or averaged interpretations. This results in a lack of explicit generative pressure regarding which specific gene events should be recovered when missing, thereby limiting capabilities in controllable completion and structured generation.

\subsection{Diffusion Models and Masked Generation}
Diffusion models offer a unified perspective for both continuous and discrete generation and have begun to be applied to single-cell generation and imputation. For example, scDiffusion utilizes conditional diffusion to generate high-fidelity expression profiles~\cite{luo2024scdiffusion}, while scVAEDer explores combining diffusion with VAEs to enhance generation and reconstruction stability~\cite{sadria2025scvaeder}. However, most single-cell diffusion models are still based on continuous noise assumptions, often requiring additional designs to handle the discrete structure of event occurrence or disappearance. In contrast, discrete diffusion (D3PM) and masked generation (MaskGIT) define masking and reverse denoising directly within the discrete state space. These methods interpret the recovery of masked positions as a single-step prediction in the reverse process~\cite{austin2021structured,chang2022maskgit}. Recent efforts to simplify and unify discrete masked diffusion objectives have further enhanced training stability and conceptual consistency~\cite{shi2024simplified}. In the language domain, LLaDA has further validated that diffusion-based pre-training, characterized by a random masking forward process combined with Transformer-based reverse prediction, can serve as an alternative paradigm to autoregression. This approach emphasizes closed-loop consistency from the training objective to generative inference~\cite{nie2025large}.

\section{ScDiVa: Diffusion Foundation Model for Single Cells}
\label{sec:method}


ScDiVa is a generative foundation model designed to address the structural mismatch between autoregressive sequence modeling \cite{floridi2020gpt} and the unordered, sparse nature of single-cell transcriptomic data \cite{cui2024scgpt}. Rather than imposing an artificial gene ordering, scDiVa models generation as a bidirectional denoising process over masked gene tokens. By formulating generation through a masked discrete diffusion process with an absorbing state \cite{austin2021structured,gu2022vector}, the model naturally aligns with stochastic signal dropout observed in single-cell sequencing \cite{lopez2018deep}. This framework enables joint modeling of gene identity and expression magnitude within a unified probabilistic formulation. A detailed biological interpretation of this formulation is provided in Appendix \ref{app:bio_diffusion}.

\subsection{Problem Formulation and Diffusion Modeling}
\label{subsec:diffusion_principle}

Traditional continuous diffusion models typically operate in Euclidean space by adding Gaussian noise \cite{ho2020denoising}. This introduces an inductive bias of ordinality, where distances between values are assumed to be semantically meaningful. Such an assumption is incompatible with the categorical nature of discrete gene tokens. To address this mismatch, we adopt a state-transition-based masked discrete diffusion paradigm \cite{germain2015made}.

Formally, let $\mathbf{x}_0 = [x_1, \dots, x_L]$ denote the discrete gene sequence representing a cell state, where each gene token $x_i$ belongs to a finite vocabulary $\mathcal{V}$. We define the forward diffusion as a continuous-time Markov process $t \in [0, 1]$ that progressively destroys information. Unlike the local noise injection in continuous models, we employ a global stochastic corruption mechanism based on an absorbing state. At any arbitrary time $t$, a token $x_t^i$ either retains its original state $x_0^i$ or transitions to the absorbing state $[\texttt{MASK}]$ (denoted as $\varnothing$) with probability defined by:
\begin{equation}
q(x_t^i | x_0^i) = (1 - t) \cdot \delta(x_t^i, x_0^i) + t \cdot \delta(x_t^i, \varnothing).
\end{equation}
This process creates a smooth trajectory from a fully determined biological profile at $t=0$ to a state of maximum entropy at $t=1$. This formulation closely mirrors the ``dropout" phenomenon in single-cell sequencing \cite{hicks2018missing}, where valid signals are stochastically lost, thereby grounding our mathematical noise model in the underlying physical properties of the data (see Appendix \ref{app:absorbing_dropout} and \ref{app:dropout_isomorphism} for the proof of dropout isomorphism).



The generative capability of scDiVa is realized through the reverse denoising process, which learns the conditional distribution $p_\theta(\mathbf{x}_0 \mid \mathbf{x}_t)$. By reconstructing original gene states from masked inputs, scDiVa performs bidirectional, non-causal modeling \cite{devlin2019bert}, enabling each gene to be inferred from global context. This design avoids the artificial ordering and asymmetric dependencies imposed by autoregressive factorizations (e.g., $\prod p(x_i \mid x_{<i})$), which are biologically implausible given the non-sequential and symmetric nature of gene regulatory interactions \cite{levine2005gene}. As a result, scDiVa directly models the joint distribution $p(\mathbf{x})$ in a manner aligned with the unordered, multiset structure of gene expression profiles \cite{hao2024large}.
A formal comparison with autoregressive and Gaussian diffusion models is provided in Appendix~\ref{app:compare_models}.

\begin{figure*}[t]
\begin{center}
\centerline{\includegraphics[width=\textwidth]{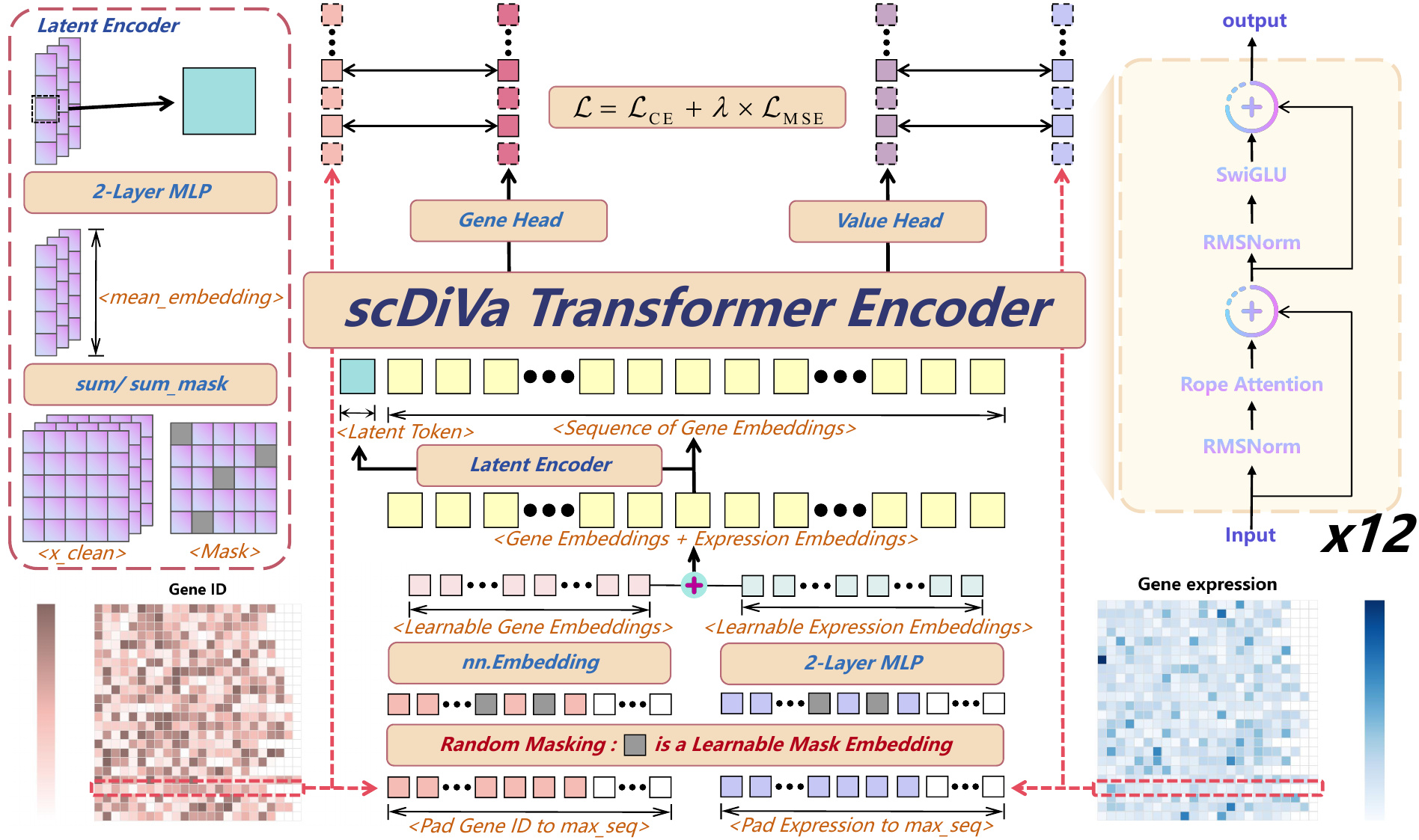}}
\caption{\textbf{Overview of the scDiVa Architecture.} The framework employs a masked modeling approach with a Latent Encoder to capture global cell contexts. The input gene expression profile is randomly masked and processed through a 12-layer Transformer encoder equipped with RoPE attention and SwiGLU activation. The model optimizes a dual objective ($\mathcal{L}$), combining Cross-Entropy ($\mathcal{L}_{CE}$) for gene identity reconstruction and Mean Squared Error ($\mathcal{L}_{MSE}$) for expression value regression.}
\label{fig:architecture}
\end{center}
\vskip -0.29in
\end{figure*}

\subsection{Cell Representation and Unified Embedding}
\label{subsec:cell_representation}

The adaptation of transcriptome data to Transformer architectures is non-trivial due to the inherent sparsity and long-tail distribution of gene expression. To maximize effective supervision within a finite context window $L$, we propose a specialized representation protocol that prioritizes information density over raw throughput.

Standard ``ranking by expression" approaches are suboptimal because they are often dominated by high-expression but low-entropy ``housekeeping" genes, which contribute minimal discriminative information regarding cell identity. To counter this, we implement \emph{Entropy-Normalized Serialization}. We define a ranking score $r_k$ that penalizes ubiquitous expression features using population-level Shannon entropy $H(g)$ \cite{brennecke2013accounting}:
\begin{equation}
r_k = \frac{v_k}{H(g_k) + \epsilon}.
\end{equation}
This formulation introduces a data-centric inductive bias, effectively filtering out biological background noise. It compels the model to allocate its computational budget to genes with the highest discriminative power \cite{stuart2019comprehensive}, ensuring that the limited token space encodes the maximum possible cell-type-specific information (see Appendix \ref{app:entropy_serialization} for detailed motivation).

Following feature selection, we construct a Unified Embedding that respects the unordered nature of the data. Since absolute positions in a gene sequence are biologically meaningless, our input embedding $\mathbf{h}^{(0)}$ integrates only gene identity and expression magnitude:
\begin{equation}
\mathbf{h}^{(0)} = \text{Emb}{gene}(g) + \text{MLP}{val}(v).
\end{equation}
While we discard absolute positional encodings to avoid imposing an artificial rigid Cartesian structure, capturing the intrinsic relationships between genes remains crucial. We therefore strategically inject relative position information via Rotary Positional Embedding (RoPE) \cite{su2024roformer} in the attention layers. This allows scDiVa to robustly learn hierarchical gene-gene dependencies without overfitting to a spurious permuted sequence order (further discussion on Transformer inductive bias is provided in Appendix \ref{app:transformer_bias}).

To further stabilize the learning process, particularly in the high-noise regimes ($t \to 1$) where the input signal is sparse, we introduce a \emph{Latent Variable Anchor Token} ($\texttt{[LAT]}$). In the encoder, $\texttt{[LAT]}$ aggregates global information via self-attention to form a state vector $\mathbf{z}_{lat}$ \cite{wang2021not}. During the reverse denoising process, $\mathbf{z}_{lat}$ functions as a ``Global Prompt", anchoring the generation to the underlying cell-state manifold. This prevents ``posterior collapse" often observed in conditional generation \cite{higgins2017beta}, enabling scDiVa to maintain identity coherence even when 90\% the gene tokens are masked (see Appendix \ref{app:latent_anchor}).

This architecture offers distinct theoretical advantages over traditional AR paradigms. First, by avoiding causal masking, scDiVa achieves Permutation Invariance, eliminating the false order sensitivity (e.g., $g_A \to g_B$) inherent in sequential models. Second, the parallel denoising mechanism prevents Exposure Bias \cite{bengio2015scheduled}, ensuring that errors do not cascade linearly across the sequence. Finally, the model facilitates Omnidirectional Flow, capturing complex feedback loops and regulatory logic that unidirectional attention mechanisms fail to represent.
\subsection{Depth-Invariant Sampling and Optimization}
\label{subsec:optimization}

We conceptualize the training process not merely as a denoising task, but as an inverse simulation of the physical sequencing process. By modeling the stochastic degradation of mRNA, scDiVa learns a universal operator capable of bridging the quality gap between low-depth droplet data and high-fidelity full-length transcriptomes.

To achieve this, we employ a Depth-Invariant Sampling strategy. Unlike BERT-style models that rely on a fixed masking rate, we sample the diffusion time step $t \sim \mathcal{U}(0,1)$ continuously \cite{dieleman2022continuous}. Physically, $t$ acts as a proxy for the reciprocal of sequencing depth, simulating the full spectrum of data sparsity. This compels the model to learn \emph{Depth Invariance}, effectively mapping inputs of varying sparsity onto a shared, canonical latent manifold \cite{gayoso2022python}. We elaborate on the correspondence between diffusion time and physical sequencing depth in Appendix \ref{app:depth_sampling}.
The optimization objective is governed by a Dual Denoising Loss, which imposes simultaneous constraints on the masked set $\mathcal{M}$ to recover both the network topology and the precise cellular state. The loss function $\mathcal{L}$ is formulated as:
\begin{equation}
\mathcal{L} = \mathbb{E}_{t, \mathbf{x}_0} \left[ \sum_{i \in \mathcal{M}} \underbrace{-\log p_\theta(g_i \mid \mathbf{x}_t)}_{\mathcal{L}_{\text{id}}} + \lambda \underbrace{\| \hat{v}_i - v_i \|^2}_{\mathcal{L}_{\text{val}}} \right].
\end{equation}
Here, the classification term $\mathcal{L}_{id}$ reconstructs the probabilistic topology of the Gene Regulatory Network (GRN), ensuring the correct identification of active genes. Concurrently, the regression term $\mathcal{L}_{val}$ enforces accurate inference of expression dosages \cite{eraslan2019single}.We provide the rigorous derivation showing that this objective optimizes a Variational Lower Bound (ELBO) on the data likelihood in Appendix \ref{app:elbo}. Minimizing this joint objective ensures that scDiVa learns a holistic representation of the cell that is robust to the technical variations inherent in different sequencing technologies.

\subsection{Model Architecture and Pre-training Protocol}
\label{subsec:implementation}

ScDiVa is instantiated as a 12-layer bidirectional Transformer encoder with approximately 94.5M parameters and a comprehensive vocabulary of 41,818 gene tokens. The complete workflow is shown in Figure \ref{fig:architecture}. Feed-forward layers employ SwiGLU activations \cite{shazeer2020glu}, and Pre-RMSNorm \cite{zhang2019root} is applied throughout for enhanced numerical stability, with all accumulation performed in \texttt{float32} precision (detailed architecture configurations are listed in Tables \ref{tab:model_config} and \ref{tab:train_config} (Appendix \ref{app:arch}). The complete training and inference algorithms are provided in Algorithm \ref{alg:training} and \ref{alg:inference} (Appendix \ref{app:algorithms}).

The model is pre-trained on a corpus of 59,162,450 single-cell transcriptomes, sourced from a large proprietary dataset spanning diverse tissues. Following standard filtering and log-normalization, we apply entropy-normalized serialization to retain the top 1,200 genes.Training is conducted on four NVIDIA A100-SXM4-40GB GPUs using a global batch size of 768 via gradient accumulation \cite{rasley2020deepspeed} and proceeds for four epochs under the depth-invariant sampling regime. Additional details on dataset composition, entropy-normalization math, and preprocessing are reported in Appendix \ref{app:data}.

\vspace{-0.1mm}
\section{Experiments}
\label{sec:Experiments}

In this section, we present a comprehensive evaluation of scDiVa across four tasks of increasing complexity. We begin with rank-value reconstruction \cite{eraslan2019single} to validate intrinsic co-expression patterns, followed by multi-batch integration \cite{gayoso2022python} to assess robustness against technical noise. We then evaluate cell type annotation (both fine-tuning and zero-shot) to test discriminative power, and finally gene perturbation prediction \cite{lotfollahi2023predicting} to verify causal inference. Extensive benchmarking against state-of-the-art foundation models \cite{hao2024large} confirms scDiVa's ability to bridge high-fidelity generation with precise biological discrimination.

\begin{table}[t]
  \centering
  \setlength{\tabcolsep}{4pt} 
  \renewcommand{\arraystretch}{0.95}
  \caption{\textbf{Rank-Value Joint Reconstruction across datasets.} Lower L-Dist is better; higher BLEU and Spearman are better.}
  \label{tab:rank_value_recon}
  \resizebox{\columnwidth}{!}{%
  \begin{tabular}{llrrr}
    \toprule
    Dataset & Model & L-Dist $\downarrow$ & BLEU $\uparrow$ & Spearman $\uparrow$ \\
    \midrule
    \multirow{4}{*}{PBMC12k}
      & GeneMamba\_U & 430 & 0.532 & 0.469 \\
      & Geneformer   & 23  & 0.968 & 0.703 \\
      & GeneMamba    & 6   & \textbf{0.987} & 0.711 \\
    \rowcolor{myteal!10} & scDiVa        & \textbf{5} & \textbf{0.987} & \textbf{0.812} \\ 
    \midrule
    \multirow{4}{*}{Pancreas}
      & GeneMamba\_U & 370 & 0.524  & 0.461 \\
      & Geneformer   & 25  & 0.956  & 0.763 \\
      & GeneMamba    & \textbf{12} & \textbf{0.991} & 0.792 \\
    \rowcolor{myteal!10} & scDiVa        & 13  & 0.965 & \textbf{0.812} \\ 
    \midrule
    \multirow{4}{*}{Zheng68k}
      & GeneMamba\_U & 432 & 0.581  & 0.503 \\
      & Geneformer   & 25  & 0.937  & 0.901 \\
      & GeneMamba    & 11  & \textbf{0.996} & 0.980 \\
    \rowcolor{myteal!10} & scDiVa        & \textbf{9} & 0.992 & \textbf{0.994} \\ 
    \midrule
    \multirow{4}{*}{Immune}
      & GeneMamba\_U & 468 & 0.659  & 0.442 \\
      & Geneformer   & 17  & 0.962  & 0.823 \\
      & GeneMamba    & 12  & \textbf{0.998} & 0.844 \\
    \rowcolor{myteal!10} & scDiVa        & \textbf{4} & 0.997 & \textbf{0.970} \\ 
    \bottomrule
  \end{tabular}%
  }
\vskip -0.1in
\end{table}

\subsection{Downstream Tasks and Comparative Methods}
\label{subsec:datasets_baselines}

\definecolor{CustomBlue}{RGB}{30,144,255}     
\definecolor{CustomRed}{RGB}{255,64,64}       
\definecolor{CustomOrange}{RGB}{225,100,014}   
\definecolor{CustomGreen}{RGB}{65,172,74}     
\definecolor{CustomPurple}{RGB}{214,77,237}  

\begin{table*}[t]
\centering
\renewcommand{\arraystretch}{1}
\caption{Benchmark results of the models on multi-batch experiments with BATCH and Cell metrics.}
\vskip -0.05in
\label{tab:multi_batch_results}
\resizebox{\textwidth}{!}{%
\begin{tabular}{llccccc>{\columncolor{myteal!10}}c} 
\toprule
\textbf{Metric} & \textbf{Dataset} & \textbf{Harmony} & \textbf{Geneformer} & \textbf{scGPT} & \textbf{scFoundation} & \textbf{GeneMamba} & \textbf{scDiVa} \\
\midrule
\multirow{5}{*}{\textbf{Avg\_batch}} 
& Immune            & 0.9514 & 0.8153 & 0.9194 & 0.8904 & 0.9536 & \textbf{0.9555} \\
& PBMC12k           & 0.9341 & 0.9545 & 0.9755 & 0.9628 & 0.9604 & \textbf{0.9960} \\
& BMMC              & 0.8999 & 0.7720 & 0.8431 & 0.7598 & 0.9157 & \textbf{0.9734} \\
& Perirhinal Cortex & 0.9442 & 0.9127 & 0.9600 & 0.9560 & \textbf{0.9673} & 0.9542 \\
& COVID-19          & 0.8781 & 0.8240 & 0.8625 & 0.8346 & 0.8742 & \textbf{0.9538} \\
\midrule
\multirow{5}{*}{\textbf{Avg\_bio}}    
& Immune            & 0.6945 & 0.6983 & 0.7879 & 0.7337 & \textbf{0.8131} & 0.7785 \\
& PBMC12k           & 0.7990 & 0.7891 & 0.9018 & 0.8662 & 0.8344 & \textbf{0.9566} \\
& BMMC              & 0.6316 & 0.6324 & 0.6576 & 0.5250 & 0.7628 & \textbf{0.8712} \\
& Perirhinal Cortex & 0.8595 & 0.8547 & 0.9552 & 0.9606 & 0.9062 & \textbf{0.9895} \\
& COVID-19          & 0.4468 & 0.5567 & 0.6476 & 0.5468 & 0.5537 & \textbf{0.6689} \\
\bottomrule
\end{tabular}
}
\vskip -0.1in
\end{table*}

\begin{figure}[t]
    \centering
    \includegraphics[width=\linewidth]{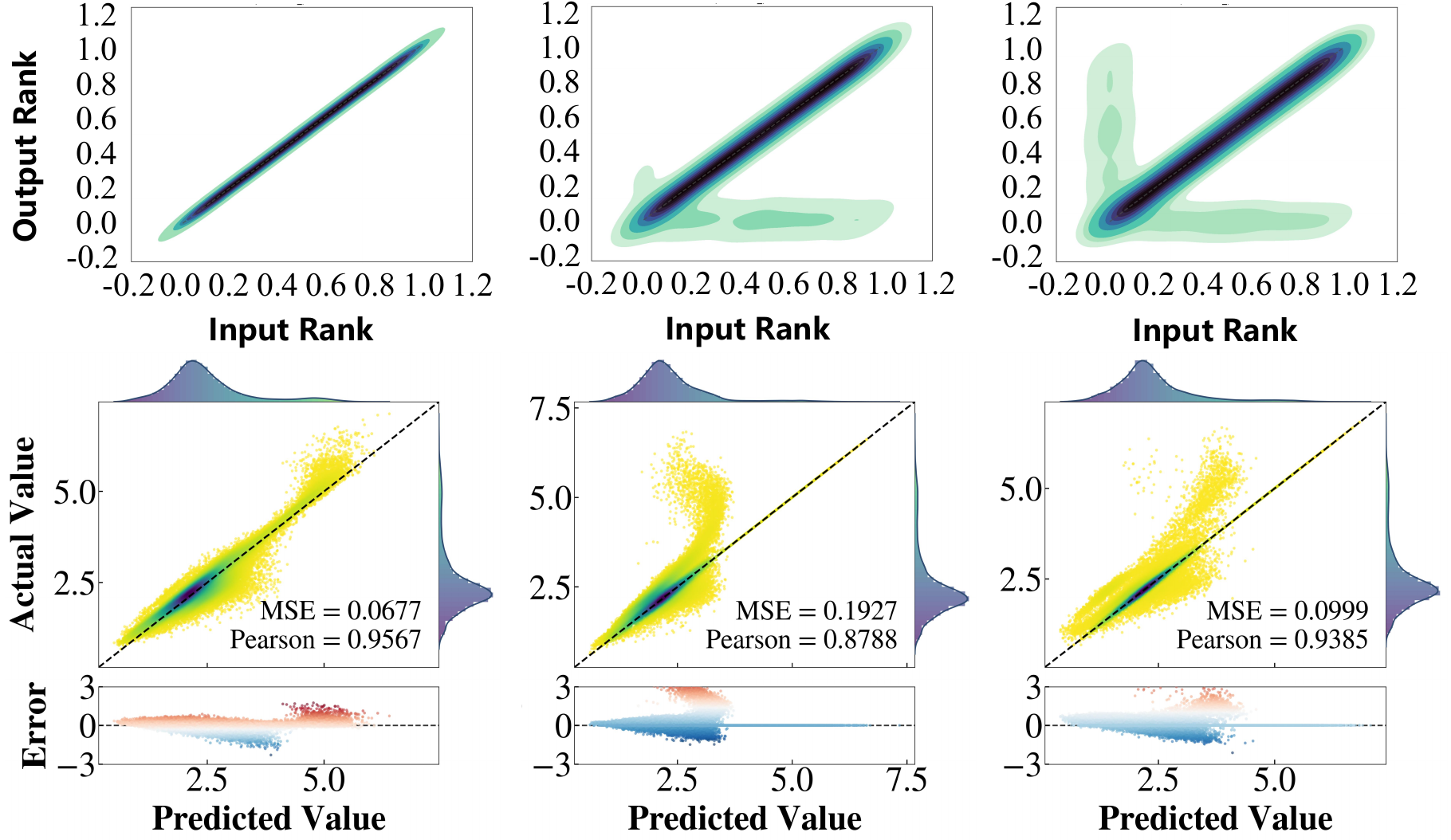}
    \vskip -0.05in
    \caption{\textbf{Visual comparison of generative dynamics.} From left to right: Zero-Mask Baseline (Ground Truth), Single-Step Inference, and 32-Step Diffusion Reconstruction.}
    \label{fig:reconstruction}
\vskip -0.18in
\end{figure}

\begin{figure*}[t]
\begin{center}
\centerline{\includegraphics[width=\textwidth]{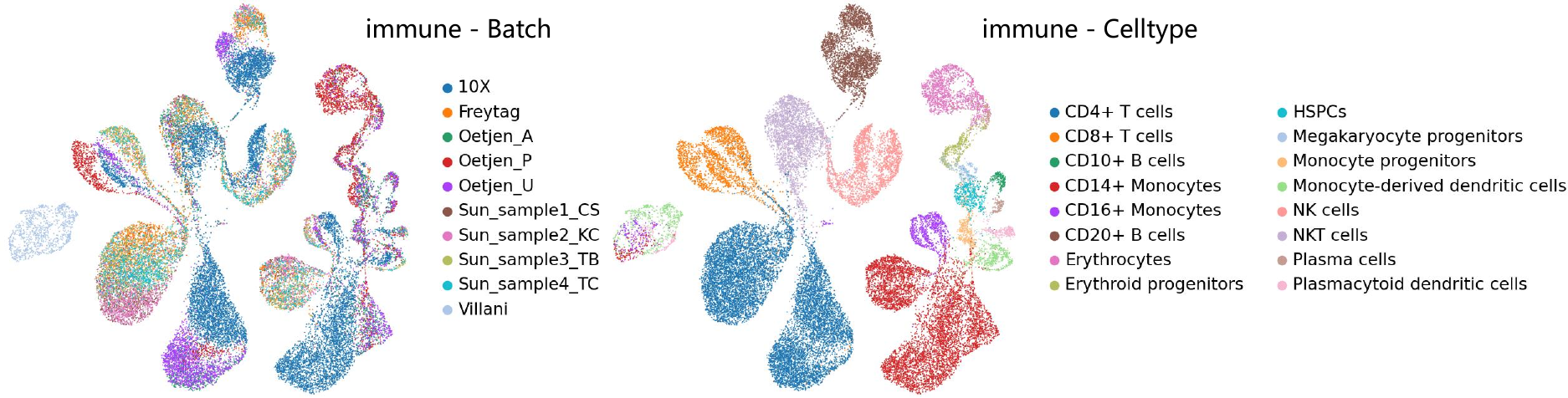}}
\vskip -0.05in
\caption{\textbf{Visualization of multi-batch integration on the Immune dataset.} The UMAP projections illustrate the latent representations learned by scDiVa. The left panel is colored by batch source, demonstrating the effective removal of batch effects (mixing). The right panel is colored by cell type, highlighting the distinct preservation of biological clusters and cellular identities.}
\label{fig:batch_immune}
\end{center}
\vskip -0.25in
\end{figure*}
To systematically evaluate scDiVa, we established a comprehensive benchmark encompassing diverse biological scenarios. Specifically, we utilized PBMC12k and Immune datasets for \textit{\textcolor{CustomBlue}{gene reconstruction}}. For \textit{\textcolor{CustomRed}{multi-batch integration}}, we assessed performance on COVID-19, Immune, PBMC12k, Perirhinal Cortex, and BMMC datasets. Adaptability in \textit{\textcolor{CustomOrange}{cell type annotation}} was evaluated via fine-tuning on hPancreas, MS, and Myeloid, alongside broad zero-shot evaluations on Cell\_Lines, DC, MCA, PBMC368K, HumanPBMC, and Pancrm. Furthermore, we employed Adamson and Norman datasets for \textit{\textcolor{CustomPurple}{gene perturbation prediction}}, and validated interpretability via \textit{\textcolor{CustomGreen}{gene correlation analysis}}.


We benchmarked scDiVa against SOTA foundation models and task-specific algorithms. Comparisons included Transformer architectures like Geneformer \cite{theodoris2023transfer} and the integration toolkit Harmony \cite{korsunsky2019fast}. Zero-shot baselines extended to CellFM \cite{zeng2025cellfm}, scBERT \cite{yang2022scbert}, GeneCompass \cite{yang2024genecompass}, UCE \cite{rosen2023universal}, as well as SVM and scmap \cite{abdelaal2019comparison}. For perturbation, we compared against specialized causal methods such as GEARS \cite{roohani2024predicting}. All evaluations adhered to preprocessing pipelines strictly aligned with original literature. Detailed dataset statistics and metric definitions are provided in Table \ref{tab:downstream_stats} (Appendix \ref{app:downstream_table}) and Appendix \ref{app:metrics}.


\begin{figure*}[t]
    \centering
    \includegraphics[width=\linewidth]{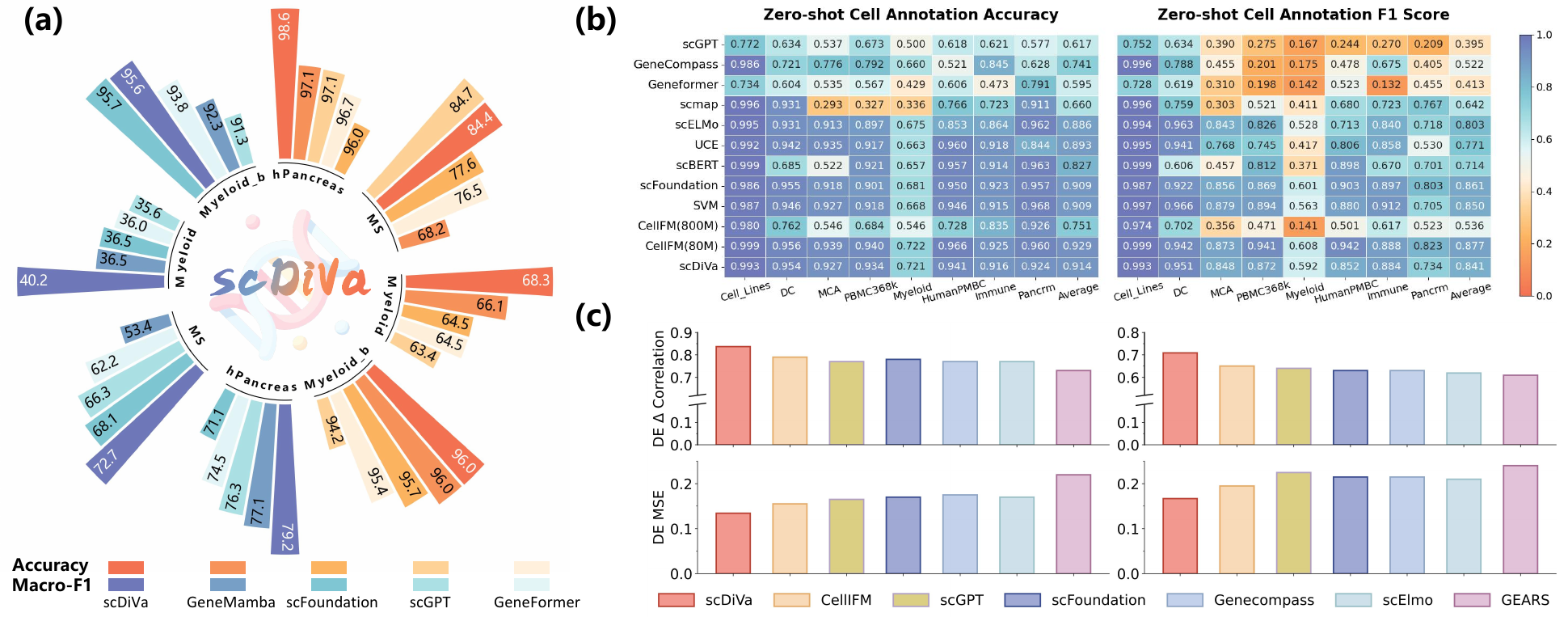}
    \vskip -0.05in
    \caption{\textbf{Comprehensive evaluation of scDiVa performance.} (a) Cross-batch full fine-tuning performance (Accuracy and Macro-F1) on hPancreas, MS, Myeloid, and Myeloid\_b datasets. (b) Zero-shot cell annotation performance (Accuracy and F1 Score) achieved by freezing the backbone and training only the MLP head across various datasets. (c) Perturbation prediction comparison against other models on the Adamson (left) and Norman (right) datasets, evaluated using DE $\Delta$ Correlation and DE MSE metrics.}
    \label{fig:组图1}
\end{figure*}
\begin{figure}[t]
    \vskip -0.1in
    \centering
    \includegraphics[width=\linewidth]{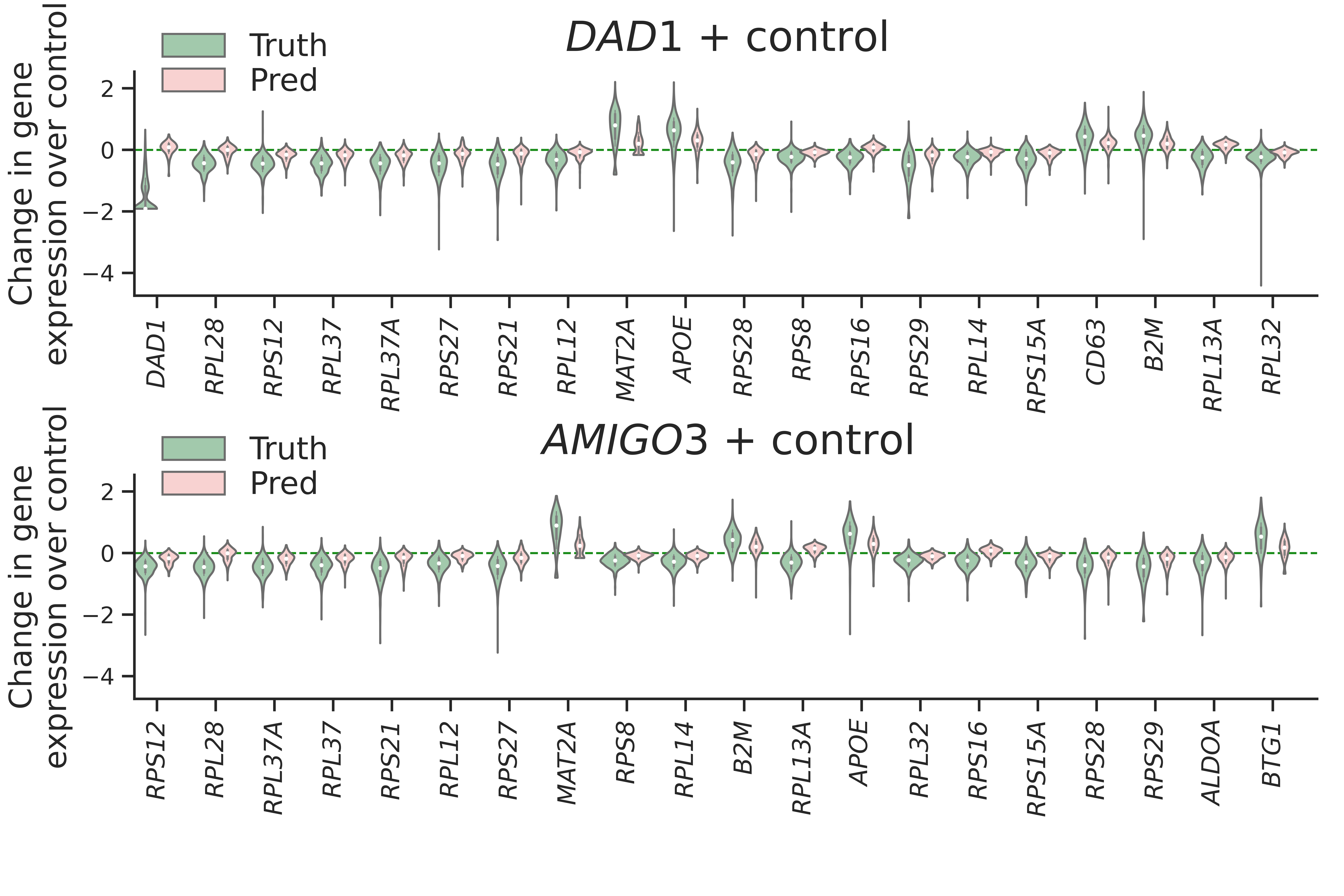}
    \vskip -0.1in
    \caption{\textbf{Predicted vs. observed expression shifts in the Adamson dataset.} Distributional changes for the top 20 differentially expressed genes under DAD1 and AMIGO3 perturbations.}
    \label{fig:panel_b_adamson}
\vskip -0.1in
\end{figure}

\vspace{-0.1mm}
\subsection{Rank-Value Joint Reconstruction}

Gene reconstruction requires restoring both relative Rank (robust against scaling noise, capturing underlying topology) and absolute Value (enforcing magnitude constraints). This dual focus effectively prevents ``pseudo-consistency'' where correct ranking masks intensity collapse.

We evaluate generative capability via a novel mask-free strategy, reordering sampled gene-value pairs based on model probabilities. Performance is comprehensively assessed using L-Dist (distribution shifts), BLEU (sequence matching), and Spearman coefficient (rank correlation).

Table \ref{tab:rank_value_recon} highlights model performance across four datasets. GeneMamba\_U performs poorly, lacking fine-grained constraints. While Geneformer and GeneMamba achieve high BLEU scores (0.94--0.998), they fail to fully capture biological rank structures. In contrast, scDiVa excels in Rank while maintaining high BLEU, achieving record Spearman correlations on Immune \textbf{(0.9701, +14.9\%)} and PBMC12k \textbf{(0.812, +14.2\%)}. This confirms scDiVa successfully recovers latent global ranking dependencies. Further analysis on the intersection of reconstructed gene features across datasets is presented in Figure \ref{fig:app_recon} in Appendix \ref{app:gene_recon}.

To investigate robustness, Figure \ref{fig:reconstruction} compares One-step Inference with Multi-step Diffusion. While the backbone handles mask-free settings well, a 30\% mask induces noticeable slight distribution shifts in One-step Inference, with local Rank deviations and heavy-tailed residuals indicating limitations in handling uncertainty. Conversely, multi-step diffusion acts as an iterative denoising mechanism, pulling samples toward low-energy manifolds to improve Rank convergence and reduce residual variance. However, given the significant latency cost for primarily local gains, we prioritize computational efficiency. As One-step Inference provides sufficient fidelity, we adopt it as the default strategy for subsequent complex downstream biological tasks.

\vspace{-0.1mm}
\subsection{Multi-batch Integration} 

We systematically evaluated scDiVa on complex multi-batch integration tasks, aiming to harmonize datasets from distinct batches while eliminating technical effects and preserving intrinsic biological heterogeneity.

To address this, we fine-tuned scDiVa using adversarial domain adaptation and a Supervised Contrastive (SupCon) loss, integrating a Gradient Reversal Layer (GRL) and batch discriminator. Fine-tuning was performed across five benchmarks: Immune, PBMC12k, BMMC, Perirhinal Cortex, and COVID-19, with performance evaluated via deep latent embeddings (Figure \ref{fig:batch_immune}; see Figure \ref{fig:app_batch} in Appendix \ref{app:batch_viz} for visualizations of the other four datasets).

For rigorous comparison, we used Avg-batch to assess batch correction and Avg-bio to evaluate biological conservation, metrics that typically reflect a fundamental trade-off between noise removal and feature preservation.

Table \ref{tab:multi_batch_results} demonstrates that scDiVa effectively reconciles this trade-off. While Harmony removes batch effects, it sacrifices biological detail (e.g., 0.4468 Avg-bio on COVID-19). In contrast, scDiVa matches or surpasses state-of-the-art models in Avg-batch (0.9960 on PBMC12k vs. 0.9755 for scGPT) while exhibiting a dominant advantage in Avg-bio. Notably, it achieves 0.8712 on BMMC, significantly exceeding GeneMamba's 0.7628. These results indicate that scDiVa successfully disentangles technical noise via adversarial learning while leveraging pre-trained knowledge to reconstruct biological topology, demonstrating superior robustness in complex integration scenarios.

\subsection{Cell Type Annotation}
We systematically evaluated cellular representation via two distinct paradigms: fine-tuning to assess cross-batch adaptation, and zero-shot evaluation to validate intrinsic linear separability by training only the MLP head.

Fine-tuning demonstrates robust stability and superior resolution for long-tail classes. In cross-batch settings (Figure \ref{fig:组图1}a), scDiVa excels in the imbalance-sensitive Macro-F1. On hPancreas, it achieves 0.986 accuracy and 0.7919 Macro-F1, indicating discriminative representations. Detailed confusion matrices and hierarchical clustering heatmaps for these fine-tuning tasks are shown in Figure \ref{fig:app_annot} (Appendix \ref{app:cell_annot}). Notably, on the highly imbalanced MS dataset, scDiVa reaches a Macro-F1 of 0.7271, 36\% improvement over GeneMamba (0.5342). These gains stem not from overfitting majority classes, but from effectively disentangling batch noise to improve decision boundaries for rare populations.

Zero-shot evaluation across eight datasets (Figure \ref{fig:组图1}b) confirms generalized semantic alignment. scDiVa achieves an average accuracy of 0.914 and Macro-F1 of 0.841, significantly performing Transformer baselines like scGPT and Geneformer. While competitive with specialized models like CellFM, scDiVa's design strategically balances generalization and plasticity. Combined with fine-tuning results, this suggests the latent space avoids overfitting specific zero-shot distributions, providing reliable initial classification while retaining superior adaptation potential for downstream tasks with minimal supervision.

\begin{figure}[t]
    \centering
    \includegraphics[width=\linewidth]{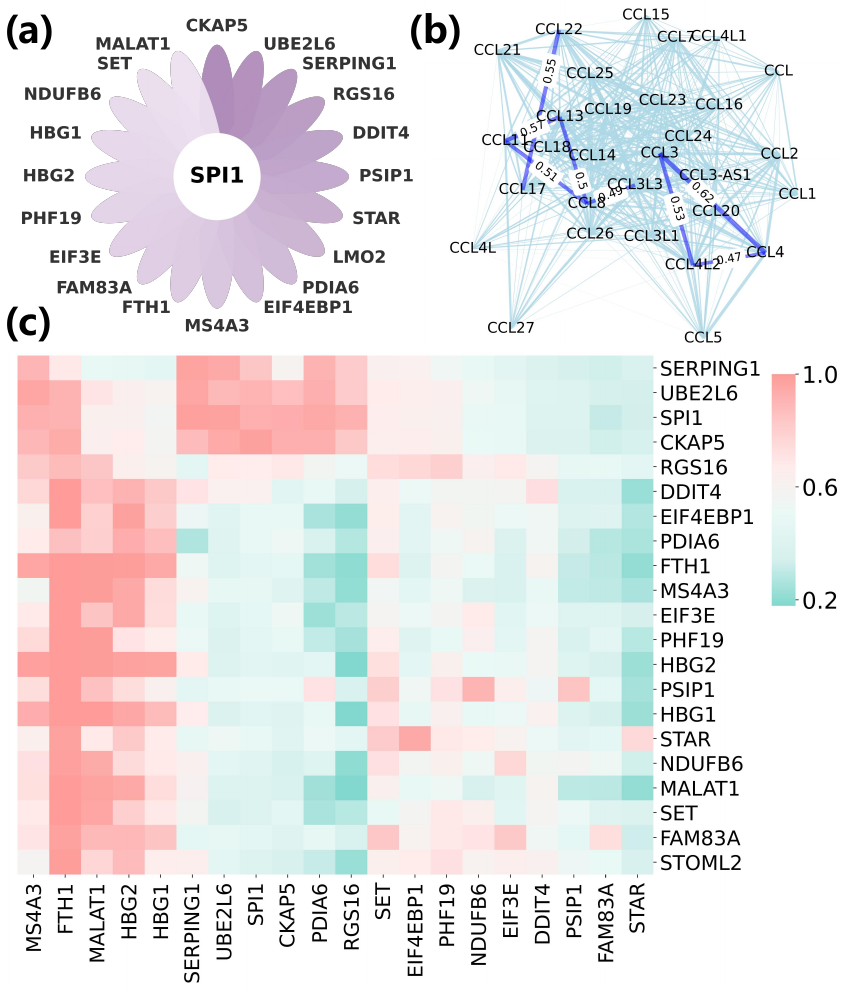}
    \vskip -0.05in
    \caption{\textbf{Visualization of Gene Regulatory Networks.} (a) Global GRN constructed from scDiVa embeddings. (b) The inferred regulon of the master regulator SPI1. (c) Attention heatmap illustrating the regulatory hub structure of SPI1.}
    \label{fig:GRN}
\vskip -0.2in
\end{figure}

\subsection{Perturbation Prediction}

The central challenge in gene perturbation prediction lies in bridging the causal gap between observational and interventional distributions, a task that necessitates inferring transcriptional cascades in the absence of direct samples. To address this, we employ a fine-tuning strategy on a pre-trained backbone. To mitigate the inherent sparsity of single-cell data, we introduce a Signal-sensitive Weighted Mean Squared Error (MSE) as the loss function, which significantly enhances the model's sensitivity to biological signals.

Experiments were conducted on two representative benchmark datasets. The Adamson dataset, comprising 87 single-gene perturbations, primarily evaluates the inference of single causal links. Conversely, the Norman dataset includes 131 double-gene perturbations and serves to probe the model's capacity to resolve non-additive genetic interactions. We benchmarked scDiVa against leading methods, including CellFM, scGPT, and GEARS.

As illustrated in Figure~\ref{fig:组图1}c, scDiVa demonstrates SOTA performance across both tasks. On the Adamson dataset, scDiVa achieves a Pearson correlation of 0.837 and a Mean Squared Error of 0.134 for differentially expressed genes. These results outperform baseline models, confirming the precision of our approach in single-point perturbation inference (Figure~\ref{fig:panel_b_adamson}). Additionally, scDiVa achieves superior Hit Rate in retrieving top-$k$ responsive genes, as analyzed in Figure \ref{fig:app_hitrate} (Appendix \ref{app:perturb}). Notably, this superiority extends to the dual-perturbation tasks within the Norman dataset, where scDiVa yields a correlation of 0.709. This generalization capability indicates that the model has effectively learned complex combinatorial logic underlying gene regulatory networks rather than memorizing training samples.

\subsection{Gene Regulatory Network Inference and Logic}
To validate interpretability, we constructed a comprehensive global Gene Regulatory Network (GRN) from scDiVa embeddings (Figure \ref{fig:GRN}a), revealing distinct functional clusters. Extended visualizations of inferred regulatory networks for other critical families (TNF, Interleukins, CD markers) are provided in Figure \ref{fig:app_grn} (Appendix \ref{app:grn_viz}). Using attention-based perturbation analysis on the myeloid master regulator SPI1, we identified the top-20 responsive genes via rank-normalized weights, successfully recovering a biologically coherent regulon (Figure \ref{fig:GRN}b).

Notably, the model precisely captures opposing forces in cell fate determination. It assigns high attention to hemoglobin genes (HBG1/2), reflecting SPI1's role in repressing erythroid differentiation to lock in myeloid fate. Concurrently, it identifies myeloid markers MS4A3 and macrophage-critical FTH1. This confirms scDiVa reconstructs SPI1's complete logic: promoting myeloid development while actively repressing erythroid potential. Crucially, generic cell-cycle genes (e.g., CCNB2, TOP2A) are absent despite typical co-expression, demonstrating scDiVa's ability to effectively filter non-causal background noise.

The heatmap (Figure \ref{fig:GRN}c) reveals structural asymmetry, with SPI1 showing strong vertical connectivity, characterizing it as a regulatory hub. A dense cluster linking SPI1, SERPING1, UBE2L6, and CKAP5 indicates high-order coupling of immune defense (ISG15 pathway) with cytoskeletal remodeling. This suggests scDiVa encodes differentiation as a coordinated program of simultaneous morphological and immune acquisition rather than isolated events. 

\vspace{-5pt}
\section{Conclusion}

In this work, we present scDiVa, a foundation model for single-cell representation learning based on masked discrete diffusion. By aligning the generative process with sequencing dropout and adopting a dual denoising objective, scDiVa mitigates the structural biases of autoregressive models and enables accurate reconstruction of both gene identity and expression magnitude. Extensive evaluations across integration, annotation, and perturbation tasks demonstrate that modeling gene expression as an unordered stochastic multiset improves biological fidelity. These results highlight discrete diffusion as a principled alternative to sequential paradigms for large-scale single-cell modeling.

\section*{Impact Statement}

This paper presents a foundation model, scDiVa, which aims to advance the field of computational biology and single-cell genomics. The goal of this work is to improve the precision of cellular representation learning, which has potential benefits for understanding disease mechanisms, drug target discovery, and personalized medicine.

There are many potential societal consequences of our work, none which we feel must be highlighted here. However, we acknowledge that the training of large-scale foundation models consumes significant computational resources and energy. We have made efforts to optimize our training strategy (e.g., Depth-Invariant Sampling) to improve efficiency. Furthermore, while the model is trained on public anonymized data, future applications in clinical settings must adhere to patient privacy and ethical data usage standards.


\bibliography{example_paper}
\bibliographystyle{icml2026}

\clearpage
\appendix

\section{Additional Methodological Details}

\subsection{Biological Interpretation of Masked Discrete Diffusion}
\label{app:bio_diffusion}

Single-cell RNA sequencing measurements can be viewed as noisy observations of an underlying biological state, where gene expression signals are subject to stochastic loss due to limited capture efficiency, amplification bias, and sequencing depth constraints. In this context, the masked discrete diffusion process employed by scDiVa provides a natural abstraction of technical dropout. 

Unlike continuous Gaussian noise, which perturbs values symmetrically in Euclidean space, dropout events correspond to the complete disappearance of gene-level signals. The absorbing \texttt{[MASK]} state used in scDiVa explicitly models this irreversible information loss. As diffusion time increases, the probability of observing a true gene signal decreases monotonically, reflecting progressively lower effective sequencing depth. This correspondence grounds the forward diffusion process in the physical data-generating mechanism of single-cell experiments.

\subsection{Absorbing-State Corruption and Dropout Correspondence}
\label{app:absorbing_dropout}

The forward corruption process in scDiVa independently replaces each gene token with an absorbing \texttt{[MASK]} state with probability proportional to diffusion time. This design differs fundamentally from additive noise models, where corrupted values remain informative through magnitude. In contrast, masked corruption enforces a discrete transition between observable and unobservable states, matching the binary nature of dropout events.

Importantly, the absorbing state ensures that once a gene is masked, no residual information about its original identity or magnitude remains in the corrupted representation. This property allows the reverse denoising process to learn explicit conditional dependencies among genes, rather than relying on smooth interpolation in value space.

\subsection{Comparison with Autoregressive and Gaussian Diffusion Models}
\label{app:compare_models}

Autoregressive (AR) models factorize the joint distribution over genes into a product of conditional distributions ordered by an arbitrary sequence. This ordering introduces artificial causal dependencies that are biologically implausible for gene regulatory networks, where interactions are typically symmetric or cyclic. Moreover, AR generation is susceptible to exposure bias, as early prediction errors propagate irreversibly.

Gaussian diffusion models avoid explicit ordering but impose an ordinal structure on gene expression values, assuming that small perturbations preserve semantic meaning. This assumption is problematic for discrete gene activation events and sparse count distributions. By contrast, masked discrete diffusion directly models presence or absence through state transitions and performs reconstruction using full bidirectional context, avoiding both ordering bias and ordinal assumptions.

\subsection{Motivation for Entropy-Normalized Serialization}
\label{app:entropy_serialization}

Gene expression distributions exhibit extreme imbalance: a small number of housekeeping genes are ubiquitously expressed across cell types, while many low-frequency genes carry high discriminative value. Ranking genes solely by expression magnitude therefore biases token allocation toward high-abundance but low-information features.

Entropy-normalized serialization mitigates this effect by weighting expression values with inverse population-level entropy. Genes that are consistently expressed across cells receive lower priority, while genes with high cell-type specificity are emphasized. This strategy allows a finite token budget to encode maximal discriminative information and improves downstream transfer performance under fixed context length constraints.

\subsection{Inductive Bias of Transformer Parameterization}
\label{app:transformer_bias}

ScDiVa employs a bidirectional Transformer encoder to parameterize the reverse denoising distribution. Absolute positional encodings are omitted because gene order in the input sequence is not biologically meaningful. Introducing absolute positions would impose a Cartesian structure that is invariant neither to permutation nor to feature selection.

Relative positional encoding via RoPE is retained to allow the model to learn structured dependencies among co-occurring genes within the serialized sequence. Since serialization is deterministic given the ranking rule, relative positions encode comparative importance rather than temporal order. This design balances permutation invariance with sufficient inductive bias to learn hierarchical gene–gene interactions.

\subsection{Latent Anchor Token and High-Mask Stability}
\label{app:latent_anchor}

At high diffusion times, a large fraction of gene tokens are masked, resulting in sparse and unstable inputs. To address this, scDiVa introduces a latent anchor token \texttt{[LAT]} that participates in self-attention but is never masked. This token aggregates global information from observed genes and serves as a persistent conditioning signal during denoising.

Empirically, the latent anchor improves training stability and prevents identity drift when the visible token set is small. Conceptually, \texttt{[LAT]} functions as a global summary of the partially observed cell state, enabling consistent reconstruction even under extreme corruption.

\subsection{Depth-Invariant Sampling as Sequencing Depth Simulation}
\label{app:depth_sampling}

Uniform sampling of diffusion time exposes the model to varying corruption levels, which can be interpreted as simulating a continuum of effective sequencing depths. Low diffusion times correspond to high-coverage profiles, while high diffusion times emulate sparse droplet-based measurements.

By training across this spectrum, scDiVa learns a depth-invariant mapping from corrupted observations to a canonical latent representation. This eliminates the need for explicit depth normalization or batch correction and enables robust transfer across datasets with heterogeneous coverage characteristics.

\subsection{Proof of Dropout Isomorphism}
\label{app:dropout_isomorphism}

\paragraph{Forward process.}
Let a serialized cell be represented by a length-$L$ sequence of discrete gene identity tokens
$x_0 = (x_0^1,\dots,x_0^L)$ with $x_0^i \in \mathcal{V}$, where $\mathcal{V}$ is a finite vocabulary of genes.
Let $\varnothing$ denote the absorbing \texttt{[MASK]} state.
The (continuous-time) masked discrete diffusion defines, for each position $i$ and time $t\in[0,1]$,
\begin{equation}
q(x_t^i \mid x_0^i)
= (1-t) \delta(x_t^i, x_0^i) + t \delta(x_t^i,\varnothing),
\label{eq:forward_mask_kernel}
\end{equation}
where $\delta(\cdot,\cdot)$ is the Kronecker delta on $\mathcal{V}\cup{\varnothing}$.
Assuming conditional independence across positions yields
$q(x_t\mid x_0) = \prod_{i=1}^L q(x_t^i\mid x_0^i)$.

\begin{theorem}[Dropout--Diffusion Isomorphism]
\label{thm:dropout_isomorphism}
Define an observation map $\phi:\mathcal{V}\cup{\varnothing}\rightarrow \mathcal{V}\cup{0}$ by
$\phi(g)=g$ for $g\in\mathcal{V}$ and $\phi(\varnothing)=0$ (interpreting \texttt{[MASK]} as complete information loss).
For each position $i$, define the observed ``signal'' random variable $y_t^i=\phi(x_t^i)$.
Then conditioned on $x_0^i$, $y_t^i$ follows a \emph{zero-inflated} (dropout) corruption:
\begin{equation}
q(y_t^i \mid x_0^i)
= (1-t) \delta(y_t^i, x_0^i) + t \delta(y_t^i, 0).
\label{eq:zero_inflation}
\end{equation}
In particular, as $t\to 1$, $q(y_t^i\mid x_0^i)\Rightarrow \delta(y_t^i,0)$, i.e., complete dropout.
\end{theorem}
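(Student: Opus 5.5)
The plan is a direct pushforward computation: $y_t^i=\phi(x_t^i)$ is a deterministic function of $x_t^i$, so its conditional law given $x_0^i$ is the image of the forward kernel \eqref{eq:forward_mask_kernel} under $\phi$:
\begin{equation*}
q(y_t^i = y \mid x_0^i)=\sum_{x\in\phi^{-1}(y)} q(x_t^i = x\mid x_0^i).
\end{equation*}
We then evaluate this sum at each point $y\in\mathcal{V}\cup\{0\}$. Conditional independence across positions is not needed, since the statement is per position. If a joint statement were desired, it would follow by taking the product of the per-position laws.

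\textbf{Case analysis.} We split according to the value of $y$.
\begin{itemize}
\item If $y=g\in\mathcal{V}$, then $\phi^{-1}(g)=\{g\}$, because $\phi$ is the identity on $\mathcal{V}$. The probability is therefore $(1-t)\delta(g,x_0^i)$.
\item If $y=0$, then $\phi^{-1}(0)=\{\varnothing\}$, and the probability is $t$.
\end{itemize}
Combining the cases gives $(1-t)\delta(y,x_0^i)+t\,\delta(y,0)$, which is \eqref{eq:zero_inflation}. As a check, the masses sum to $(1-t)+t=1$.

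\textbf{The delicate point.} The one step requiring care is that $\phi$ must be injective on $\mathcal{V}\cup\{\varnothing\}$. Concretely, the symbol $0$ must not already belong to $\mathcal{V}$. Otherwise $\phi^{-1}(0)$ would contain a genuine gene token, the two Kronecker terms would merge, and the decomposition into a "true signal" part and a "dropout" part would not be well defined. I would state explicitly that $0\notin\mathcal{V}$; this is the natural reading, since $0$ denotes "no signal". Under this convention $\phi$ is a bijection onto its image, so the forward chain and the observed chain are isomorphic as labeled Markov kernels, which is what justifies the word "isomorphism". This kernel isomorphism is exactly the content of the result; nothing stronger is being proved.

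\textbf{Limit $t\to1$.} For any test function $f$ on the finite set $\mathcal{V}\cup\{0\}$,
\begin{equation*}
\mathbb{E}[f(y_t^i)\mid x_0^i]=(1-t)f(x_0^i)+t f(0)\to f(0).
\end{equation*}
This gives weak convergence, and indeed convergence in total variation, to $\delta(\cdot,0)$. The total variation distance equals $1-t$ when $x_0^i\neq0$, which holds automatically under the convention above.
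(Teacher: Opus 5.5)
Your proposal is correct and follows the paper's proof: the paper likewise reads off $\mathbb{P}(y_t^i=x_0^i\mid x_0^i)=1-t$ and $\mathbb{P}(y_t^i=0\mid x_0^i)=t$ from the forward kernel through $\phi$, then lets $t\to1$ to get weak convergence. Your explicit convention $0\notin\mathcal{V}$ states an assumption that the paper's first equality uses without saying so; note, though, that the mixture formula itself still holds if $0\in\mathcal{V}$, because when $x_0^i=0$ both sides equal $\delta(y_t^i,0)$, so injectivity of $\phi$ matters for the interpretation rather than for the identity.
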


\begin{proof}
By definition of $\phi$ and the forward kernel in Eq.~\eqref{eq:forward_mask_kernel},
\begin{align}
\mathbb{P}(y_t^i = x_0^i \mid x_0^i) &= \mathbb{P}(x_t^i = x_0^i \mid x_0^i) = 1-t, \\
\mathbb{P}(y_t^i = 0 \mid x_0^i) &= \mathbb{P}(x_t^i = \varnothing \mid x_0^i) = t.
\end{align}
Since $y_t^i$ takes values only in ${x_0^i,0}$ under $\phi$, the conditional distribution is exactly the mixture in
Eq.~\eqref{eq:zero_inflation}.
Taking $t\to 1$ gives $\mathbb{P}(y_t^i=0\mid x_0^i)\to 1$, hence weak convergence to $\delta(y_t^i,0)$.
\end{proof}

\paragraph{Connection to expression dropout.}
For a paired gene--value representation $(g_i,v_i)$, define the value corruption $\tilde v_t^i = \mathbb{I}[x_t^i\neq \varnothing]\cdot v_i$.
Then $\tilde v_t^i$ obeys the same zero-inflated form:
\begin{equation}
q(\tilde v_t^i \mid v_i)
= (1-t) \delta(\tilde v_t^i, v_i) + t \delta(\tilde v_t^i, 0),
\label{eq:value_zero_inflation}
\end{equation}
which matches the canonical abstraction of technical dropout as complete signal disappearance.

\subsection{ELBO Derivation for Masked Discrete Diffusion}
\label{app:elbo}

We provide a rigorous likelihood lower bound for the discrete identity component; the value regression term is then interpreted as a Gaussian log-likelihood term.

\paragraph{Discrete-time construction.}
Let $0=t_0<t_1<\cdots<t_K=1$ be a discretization.
Define the forward Markov chain
\begin{align}
q(x_{t_k}\mid x_{t_{k-1}}) &= \prod_{i=1}^L q(x_{t_k}^i\mid x_{t_{k-1}}^i), \nonumber \\
q(x_{t_k}^i=\varnothing \mid x_{t_{k-1}}^i\neq \varnothing) &= \frac{t_k-t_{k-1}}{1-t_{k-1}}, \quad q(\varnothing\mid \varnothing)=1,
\label{eq:forward_markov_discretized}
\end{align}
so that the marginal $q(x_{t_k}^i\mid x_0^i)$ equals Eq.~\eqref{eq:forward_mask_kernel} at $t=t_k$.
Let the prior at $t_K$ be the fully-masked absorbing state:
\begin{equation}
p(x_{t_K})=\delta\big(x_{t_K},\varnothing^L\big).
\label{eq:prior_fully_masked}
\end{equation}
Let the reverse model be a parametric family $p_\theta(x_{t_{k-1}}\mid x_{t_k})$ (implemented by a bidirectional Transformer conditioned on $x_{t_k}$ and $t_k$).

\paragraph{Model likelihood.}
The induced model distribution over clean sequences is
\begin{equation}
p_\theta(x_0) = \sum_{x_{t_1},\dots,x_{t_K}} p(x_{t_K}) \prod_{k=K}^{1} p_\theta(x_{t_{k-1}}\mid x_{t_k}),
\label{eq:model_marginal}
\end{equation}
where $x_{t_0}\equiv x_0$.

\paragraph{Variational lower bound (ELBO).}
Choose the variational posterior as the forward diffusion path
$q(x_{t_{1:K}}\mid x_0)=\prod_{k=1}^K q(x_{t_k}\mid x_{t_{k-1}})$.
Then,
\begin{align}
\log p_\theta(x_0) &= \log \mathbb{E}_{q(x_{t_{1:K}}|x_0)} \bigg[\frac{p(x_{t_K})\prod_{k=K}^{1} p_\theta(x_{t_{k-1}}\mid x_{t_k})}{q(x_{t_{1:K}}\mid x_0)} \bigg] \nonumber\\
&\ge \mathbb{E}_{q} \Bigg[ \log p(x_{t_K}) + \sum_{k=K}^1 \log \frac{p_\theta(x_{t_{k-1}}|x_{t_k})}{q(x_{t_k}|x_{t_{k-1}})} \Bigg] \nonumber\\
&\triangleq \mathcal{L}_{\mathrm{ELBO}}(\theta;x_0),
\label{eq:elbo_path}
\end{align}
where the inequality is Jensen's inequality.
By construction, $\mathcal{L}_{\mathrm{ELBO}}(\theta;x_0)\le \log p_\theta(x_0)$.

\paragraph{Optimization target.}
In Eq.~\eqref{eq:elbo_path}, the terms $\log p(x_{t_K})$ and $\log q(\cdot)$ do not depend on $\theta$.
Therefore, maximizing $\mathcal{L}_{\mathrm{ELBO}}$ is equivalent to maximizing
$\mathbb{E}_q\big[\sum_{k=K}^{1}\log p_\theta(x_{t_{k-1}}\mid x_{t_k})\big]$.
A standard Monte-Carlo estimator is obtained by sampling a random timestep $k$ (or continuous $t$) and training the model to predict the uncorrupted token identities at masked positions.

\paragraph{Single-step objective and the scDiVa loss.}
Let $t\sim \mathrm{Unif}(0,1)$ and $x_t\sim q(\cdot\mid x_0,t)$ as in Eq.~\eqref{eq:forward_mask_kernel}.
Let $M_t=\{i: x_t^i=\varnothing\}$ be the masked index set.
Assume the reverse conditional factorizes over positions given $x_t$ (standard in masked modeling):
\begin{align}
p_\theta(x_0\mid x_t,t) &= \prod_{i\in M_t} p_\theta(x_0^i\mid x_t,t), \label{eq:factorized_denoiser} \\
\log p_\theta(x_0\mid x_t,t) &= \sum_{i\in M_t}\log p_\theta(x_0^i\mid x_t,t). \label{eq:masked_loglik}
\end{align}
Using the practical normalization $|M_t|^{-1}$ (equivalently $\approx (tL)^{-1}$ in expectation) yields the training objective
\begin{equation}
\begin{split}
\mathcal{L}_{\mathrm{id}}(\theta) \triangleq \mathbb{E}_{x_0, t, x_t} \bigg[ \frac{1}{|M_t|}\sum_{i\in M_t}\log p_\theta(x_0^i\mid x_t,t) \bigg],
\end{split}
\label{eq:identity_objective}
\end{equation}
which is a (constant-shifted) stochastic estimator of the reverse term in Eq.~\eqref{eq:elbo_path}, and thus maximizes a lower bound on $\mathbb{E}_{p_{\mathrm{data}}}[\log p_\theta(x_0)]$.

\paragraph{Incorporating continuous values.}
For each token position $i$, scDiVa predicts both identity and value.
Let $g_i$ denote the true gene identity and $v_i\in\mathbb{R}$ its (log-normalized) expression.
We interpret the regression term as a Gaussian likelihood
$p_\theta(v_i\mid x_t,t)=\mathcal{N}(\hat v_i,\sigma^2)$ with fixed variance $\sigma^2$.
Then
\begin{equation}
\log p_\theta(v_i\mid x_t,t) = -\frac{1}{2\sigma^2}|\hat v_i-v_i|^2 + \mathrm{const}.
\label{eq:gaussian_value_ll}
\end{equation}
Consequently, defining $\lambda\triangleq\frac{1}{2\sigma^2}$, the joint objective
\begin{align}
\mathcal{L}_{\text{total}}(\theta) &\triangleq \mathcal{L}_{\mathrm{id}}(\theta) + \lambda \mathbb{E}\left[ \frac{1}{|M_t|}\sum_{i\in M_t}\log p_\theta(v_i\mid x_t,t) \right]
\end{align}
maximizes a lower bound on the joint log-likelihood of identities and values under the assumed factorization.

\subsection{Algorithms}
\label{app:algorithms}

\begin{algorithm}[H]
\caption{scDiVa Training}
\label{alg:training}
\begin{algorithmic}[1]
\REQUIRE Dataset $\mathcal{D}$ of cells; max length $L_{\max}$; model $p_\theta$ with \texttt{[LAT]}; loss weight $\lambda$
\STATE Initialize $\theta$ (AdamW optimizer)
\REPEAT
\STATE Sample mini-batch of cells ${(g^{(b)}_{1:L_b}, v^{(b)}_{1:L_b})}_{b=1}^B$ after entropy-normalized serialization
\STATE Pad to $L_{\max}$ using \texttt{[PAD]}; prepend \texttt{[LAT]} token (never masked)
\STATE Sample diffusion time $t \sim \mathrm{Unif}(0,1)$
\FOR{$b=1$ to $B$}
\STATE For each position $i$ (excluding \texttt{[LAT]} and \texttt{[PAD]}), sample mask $m_i \sim \mathrm{Bernoulli}(t)$
\STATE Set corrupted token $x_t^i \leftarrow \varnothing$ if $m_i=1$, else $x_t^i \leftarrow g_i$; record $M_t={i:m_i=1}$
\STATE Replace masked value input with a sentinel (e.g., $0$) or a learned \texttt{[MASK]} value embedding
\ENDFOR
\STATE Forward pass: obtain hidden states ${h_i}$ from the bidirectional Transformer
\STATE Predict gene logits $\hat y_{\mathrm{id}}^i$ and value predictions $\hat v_i$ for all positions
\STATE Compute identity loss on masked positions:
\[
\mathcal{L}_{\mathrm{id}} \leftarrow -\frac{1}{|M_t|}\sum_{i\in M_t}\log p_\theta(g_i \mid x_t,t)
\]
\STATE Compute value loss on masked positions:
\[
\mathcal{L}_{\mathrm{val}} \leftarrow \frac{1}{|M_t|}\sum_{i\in M_t}|\hat v_i - v_i|^2
\]
\STATE Total loss: $\mathcal{L}\leftarrow \mathcal{L}_{\mathrm{id}} + \lambda \mathcal{L}_{\mathrm{val}}$
\STATE Update parameters $\theta \leftarrow \mathrm{AdamW}(\theta,\nabla_\theta \mathcal{L})$
\UNTIL{convergence}
\end{algorithmic}
\end{algorithm}

\begin{algorithm}[t]
\caption{scDiVa Inference}
\label{alg:inference}
\begin{algorithmic}[1]
\REQUIRE Trained model $p_\theta$; length $L$; steps $K$; schedule $\{t_k\}_{k=0}^K$ with $t_0=0<t_1<\cdots<t_K=1$
\STATE Initialize $x_{t_K}\leftarrow \varnothing^L$ (fully masked), prepend \texttt{[LAT]}
\FOR{$k=K$ down to $1$}
    \STATE Run mask predictor to obtain $p_\theta(\cdot\mid x_{t_k},t_k)$ and value predictions $\hat v$
    \STATE For each masked position $i$, sample $\tilde g_i \sim p_\theta(\cdot\mid x_{t_k},t_k)$ and set provisional unmasking
    \STATE Determine target masking ratio $t_{k-1}$; unmask a fraction $(t_k-t_{k-1})/t_k$ of currently masked tokens
    \STATE (Optional) Low-confidence remasking: remask the fraction $t_{k-1}/t_k$ of tokens with lowest confidence to match $t_{k-1}$
    \STATE Set $x_{t_{k-1}}$ accordingly; keep \texttt{[LAT]} unchanged
\ENDFOR
\end{algorithmic}
\end{algorithm}

\section{Dataset and Preprocessing Details}
\label{app:data}

\subsection{Entropy-Normalized Serialization}
\label{app:entropy_serialization_math}

Let $v_{c,g}\in\mathbb{R}_{\ge 0}$ denote the (log-normalized) expression of gene $g$ in cell $c$.
To quantify population-level ubiquity, we compute Shannon entropy for each gene $g$.
Let $X_g$ be a discretized random variable obtained by binning $v_{c,g}$ across the pre-training corpus (e.g., via fixed-width or quantile bins),
with empirical probabilities $p_g(x)=\mathbb{P}(X_g=x)$.
Then the gene entropy is
\begin{equation}
H(g) = -\sum_{x} p_g(x)\log\big(p_g(x)\big).
\label{eq:gene_entropy}
\end{equation}
Given a cell-specific value $v_{c,g}$, we define the entropy-normalized ranking score
\begin{equation}
S_g(c) = \frac{v_{c,g}}{H(g)+\epsilon},
\label{eq:entropy_score}
\end{equation}
where $\epsilon>0$ prevents division by zero.
For each cell $c$, scDiVa selects the top-$L_{\max}$ genes by descending $S_g(c)$ and serializes them deterministically into a length-$L_{\max}$ sequence.

\subsection{Dataset Composition and Preprocessing Details}
\label{app:dataset_details}

The pre-training corpus comprises 59,162,450 single-cell transcriptomes aggregated from diverse tissues, conditions, and sequencing technologies. Cells with fewer than 200 detected genes were removed. Expression counts were log-normalized following standard preprocessing practice.

Entropy statistics were computed globally across the pre-training corpus and fixed prior to model training. For each cell, the top 1,200 genes ranked by entropy-normalized score were selected and serialized deterministically. This identical preprocessing pipeline was applied during pre-training and downstream evaluation to ensure distributional consistency.

\subsection{Pre-training Data Summary}
\label{app:pretrain_data}

The pre-training corpus consists of a large-scale, proprietary single-cell transcriptomic dataset aggregated from internal sources. Due to strict data privacy regulations and commercial confidentiality agreements, the specific composition, donor metadata, and source breakdown of this corpus cannot be publicly disclosed. However, the dataset is curated to ensure high diversity, covering a wide range of tissue types, developmental stages, and sequencing technologies comparable to major public archives.

\subsection{Downstream Dataset Statistics}
\label{app:downstream_table}

For all downstream evaluation tasks, including fine-tuning, zero-shot learning, and integration, we exclusively utilized publicly available, open-source datasets. These benchmarks (summarized in Table \ref{tab:downstream_stats}) represent standard community resources, ensuring the transparency and reproducibility of our evaluation metrics.

\begin{table*}[t]
\centering
\caption{Downstream dataset statistics used in our evaluations. Sparsity is the fraction of zero entries in the raw gene-by-cell count matrix.}
\label{tab:downstream_stats}
\resizebox{\textwidth}{!}{
\begin{tabular}{l l r r r c r}
\toprule
\textbf{Dataset} & \textbf{Task} & \textbf{N\_Cells} & \textbf{N\_Genes} & \textbf{Sparsity} & \textbf{Batches} & \textbf{CellTypes} \\
\midrule
Immune & Recon/GRN & 32,484 & 12,303 & 88.15\% & 9 & 16 \\
Zheng68k & Recon/GRN & 68,579 & 32,738 & 98.34\% & \textit{N/A} & \textit{N/A} \\
\midrule
BMMC & Integration & 90,261 & 14,087 & 88.87\% & 12 & 45 \\
Perirhinal & Integration & 17,535 & 59,357 & 96.33\% & 2 & 10 \\
PBMC12k & Integration & 11,990 & 3,346 & 86.32\% & 2 & 9 \\
COVID-19 & Integration & 20,000 & 1,200 & 89.52\% & 2 & 39 \\
\midrule
MS & Fine-tuning & 21,312 & 3,000 & 89.28\% & 2 & 18 \\
hPancreas & Fine-tuning & 14,818 & 3,000 & 87.06\% & 2 & 14 \\
Myeloid & Fine-tuning & 13,178 & 3,000 & 80.84\% & 2 & 21 \\
Myeloid\_b & Fine-tuning & 9,926 & 3,000 & 81.16\% & 2 & 7 \\
\midrule
Cell Lines & Zero-shot & 9,531 & 32,738 & 89.80\% & 3 & 2 \\
DC & Zero-shot & 576 & 26,593 & 80.98\% & 2 & 4 \\
HumanPBMC & Zero-shot & 15,476 & 33,694 & 95.20\% & 2 & 9 \\
MCA & Zero-shot & 6,954 & 15,006 & 91.22\% & 2 & 11 \\
PBMC & Zero-shot & 18,868 & 6,998 & 95.32\% & 2 & 7 \\
PBMC\_368K & Zero-shot & 4,638 & 14,236 & 94.93\% & 2 & 8 \\
Pancrm & Zero-shot & 14,767 & 15,558 & 77.85\% & 5 & 15 \\
\midrule
Adamson & Perturbation & 68,603 & 5,060 & 79.32\% & \textit{N/A} & 1 \\
Norman & Perturbation & 91,205 & 5,045 & 91.89\% & \textit{N/A} & 1 \\
\bottomrule
\end{tabular}}
\end{table*}

\section{Model Architecture and Implementation Details}
\label{app:arch}

\subsection{Joint Embedding Formulation}
\label{app:joint_embedding}

Each serialized position corresponds to a gene identity $g\in\mathcal{V}$ and a continuous value $v\in\mathbb{R}$.
Let $d$ be the hidden dimension.
The input representation is
\begin{equation}
h_{\text{input}} = \mathrm{Emb}_{\text{id}}(g) + \mathrm{MLP}_{\text{val}}(v),
\label{eq:joint_embedding}
\end{equation}
where $\mathrm{Emb}_{\text{id}}:\mathcal{V}\rightarrow\mathbb{R}^{d}$ is a learnable embedding table,
and $\mathrm{MLP}_{\text{val}}:\mathbb{R}\rightarrow\mathbb{R}^{d}$ is a 2-layer perceptron projecting scalar values to $\mathbb{R}^{d}$, e.g.,
\begin{equation}
\mathrm{MLP}_{\text{val}}(v)=W_2 \sigma(W_1 v + b_1)+b_2,
\label{eq:mlp_val}
\end{equation}
with nonlinearity $\sigma(\cdot)$ (e.g., SiLU).

\subsection{Pre-Norm Transformer Block Dynamics}
\label{app:prenorm}

Let $x_l\in\mathbb{R}^{L\times d}$ denote the sequence representation at layer $l$.
Using Pre-Norm with RMSNorm, the $l$-th block computes
\begin{equation}
x'_{l} = x_l + \mathrm{Attention}\left(\mathrm{RMSNorm}(x_l)\right),
\label{eq:prenorm_attn}
\end{equation}
\begin{equation}
x_{l+1} = x'_l + \mathrm{FFN}\left(\mathrm{RMSNorm}(x'_l)\right).
\label{eq:prenorm_ffn}
\end{equation}
No causal mask is used, enabling bidirectional conditioning across the serialized gene sequence.

\subsection{Component Formulations}
\label{app:components}

\paragraph{Multi-head attention.}
For head dimension $d_h=d/H$ with $H$ heads, define $Q=XW_Q$, $K=XW_K$, $V=XW_V$.
RoPE is applied to $(Q,K)$ (below).
The attention output is
\begin{equation}
\begin{split}
\mathrm{Attention}(X) &= \mathrm{Concat}\left(\mathrm{head}_1,\ldots,\mathrm{head}_H\right)W_O, \\
\mathrm{head}_h &= \mathrm{softmax}\left(\frac{\tilde Q_h \tilde K_h^\top}{\sqrt{d_h}}\right)\tilde V_h.
\end{split}
\label{eq:mha}
\end{equation}

\paragraph{SwiGLU.}
The feed-forward network uses SwiGLU gating:
\begin{equation}
\mathrm{SwiGLU}(x)=\left(\mathrm{SiLU}(xW_g)\odot(xW_u)\right)W_d.
\label{eq:swiglu}
\end{equation}

\paragraph{RMSNorm.}
For $\epsilon>0$ and learnable scale $\gamma\in\mathbb{R}^{d}$,
\begin{equation}
\mathrm{RMSNorm}(x)=\frac{x}{\sqrt{\mathrm{Mean}(x^2)+\epsilon}}\odot\gamma.
\label{eq:rmsnorm}
\end{equation}

\paragraph{RoPE.}
Let $\Theta$ be the RoPE base (we use $\Theta=10000$).
For each head and each position $\mathrm{pos}$, RoPE applies a block-diagonal rotation to pairs of coordinates.
For $j=0,\dots,\frac{d_h}{2}-1$, define angular frequency $\omega_j=\Theta^{-2j/d_h}$ and
\begin{equation}
R_{\Theta,\mathrm{pos}}^{(j)} =
\begin{bmatrix}
\cos(\omega_j,\mathrm{pos}) & -\sin(\omega_j,\mathrm{pos})\\
\sin(\omega_j,\mathrm{pos}) & \cos(\omega_j,\mathrm{pos})
\end{bmatrix}.
\label{eq:rope_rotation}
\end{equation}
Applying $R_{\Theta,\mathrm{pos}}$ to each pair yields $\tilde Q=R_{\Theta,\mathrm{pos}}Q$ and $\tilde K=R_{\Theta,\mathrm{pos}}K$.

\paragraph{Dual output heads.}
Let $h_L^i\in\mathbb{R}^{d}$ be the final-layer hidden state at position $i$.
scDiVa predicts gene identity logits and a scalar value:
\begin{equation}
\begin{split}
\hat y_{\text{id}}^i &= \mathrm{Linear}_{\text{gene}}(h_L^i)\in\mathbb{R}^{|\mathcal{V}|}, \\
\hat y_{\text{val}}^i &= \mathrm{Linear}_{\text{val}}(h_L^i)\in\mathbb{R}.
\end{split}
\label{eq:dual_heads}
\end{equation}

\subsection{Hyperparameters}
\label{app:hparams}
Tables \ref{tab:model_config} and \ref{tab:train_config} list hyperparameters selected for efficiency. The architecture employs SwiGLU , RMSNorm, and RoPE to ensure stability and expressivity, optimized via AdamW.

\begin{table}[H]
\centering
\caption{Model configuration.}
\label{tab:model_config}
\begin{tabular}{l r}
\toprule
\textbf{Item} & \textbf{Value} \\
\midrule
\# Layers & 12 \\
Hidden dim $d$ & 512 \\
\# Attention heads $H$ & 8 \\
FFN hidden dim & 2048 \\
Vocabulary size $|\mathcal{V}|$ & 41,818 \\
Max sequence length $L_{\max}$ & 1200 \\
Normalization & RMSNorm ($\epsilon=10^{-5}$) \\
Activation & SwiGLU \\
RoPE base $\Theta$ & 10000 \\
\bottomrule
\end{tabular}
\end{table}

\begin{table}[t]
\centering
\caption{Training configuration.}
\label{tab:train_config}
\begin{tabular}{l r}
\toprule
\textbf{Item} & \textbf{Value} \\
\midrule
Global batch size & 768 \\
Optimizer & AdamW \\
Loss weight $\lambda$ (value term) & 10.0 \\
Time sampling & $t\sim \mathrm{Unif}(0,1)$ \\
\bottomrule
\end{tabular}
\end{table}

\section{Detailed Evaluation Metrics}
\label{app:metrics}

\subsection{Gene Reconstruction Metrics}
\label{app:recon_metrics}

Let $g_{1:L}$ be the ground-truth serialized gene identity sequence and $\hat g_{1:L}$ the reconstructed sequence.
Let $v_{1:L}$ and $\hat v_{1:L}$ be the corresponding expression values.

\paragraph{L-Dist (Wasserstein-1 over ranks).}
Let $\pi(g)$ be the rank (position index) of gene $g$ in the ground-truth list and $\hat\pi(g)$ the rank in the predicted list (restricted to the evaluated set).
Define
\begin{equation}
\mathrm{L\text{-}Dist}(g,\hat g)
=
\frac{1}{L}\sum_{k=1}^{L}\left|k - \hat\pi(g_k)\right|,
\label{eq:ldist}
\end{equation}
which is equivalent to the 1-Wasserstein (earth mover) distance between the two uniform measures over ranks with ground cost $|i-j|$ under the induced permutation.

\paragraph{BLEU for gene sequences.}
Treat $g_{1:L}$ and $\hat g_{1:L}$ as token sequences.
For $n$-grams ($n=1,\dots,N$), define clipped precision $p_n$ and the brevity penalty $\mathrm{BP}$.
BLEU-$N$ is
\begin{equation}
\mathrm{BLEU}\text{-}N = \mathrm{BP}\cdot \exp\left(\sum_{n=1}^{N} w_n \log p_n\right),
\label{eq:bleu}
\end{equation}
with weights $w_n$ (typically $w_n=\frac{1}{N}$).

\paragraph{Spearman correlation for values.}
Compute Spearman's rank correlation between $\hat v$ and $v$ across evaluated genes:
\begin{equation}
\rho_{\mathrm{sp}} = 1 - \frac{6\sum_{j=1}^{m} d_j^2}{m(m^2-1)},
\label{eq:spearman}
\end{equation}
where $m$ is the number of evaluated genes and $d_j$ is the difference between ranks of $\hat v_j$ and $v_j$.

\subsection{Integration Metrics}
\label{app:integration_metrics}

Let $z_i\in\mathbb{R}^{d_z}$ be the learned cell embedding for cell $i$, with batch label $b_i$ and biological label (cell type) $c_i$.
Let $\mathrm{ASW}(\cdot)$ denote the average silhouette width computed from pairwise distances in embedding space.

\paragraph{Avg-Batch.}
We quantify batch mixing via
\begin{equation}
\mathrm{Avg\text{-}Batch} = 1 - \mathrm{ASW}_{\text{batch}},
\label{eq:avg_batch}
\end{equation}
where $\mathrm{ASW}_{\text{batch}}=\mathrm{ASW}({z_i},{b_i})$ (lower silhouette for batch labels indicates better mixing).

\paragraph{Avg-Bio.}
We quantify biological conservation by combining clustering agreement and within-type compactness.
Let $\widehat c_i$ be cluster assignments obtained from ${z_i}$.
Define normalized mutual information $\mathrm{NMI}(\widehat c,c)$, adjusted Rand index $\mathrm{ARI}(\widehat c,c)$, and biological silhouette $\mathrm{ASW}_{\text{bio}}=\mathrm{ASW}({z_i},{c_i})$.
Then
\begin{equation}
\mathrm{Avg\text{-}Bio} = \frac{1}{3}\left(\mathrm{NMI}+\mathrm{ARI}+\mathrm{ASW}_{\text{bio}}\right).
\label{eq:avg_bio}
\end{equation}
\begin{figure}[t]
    \centering
    \includegraphics[width=\columnwidth]{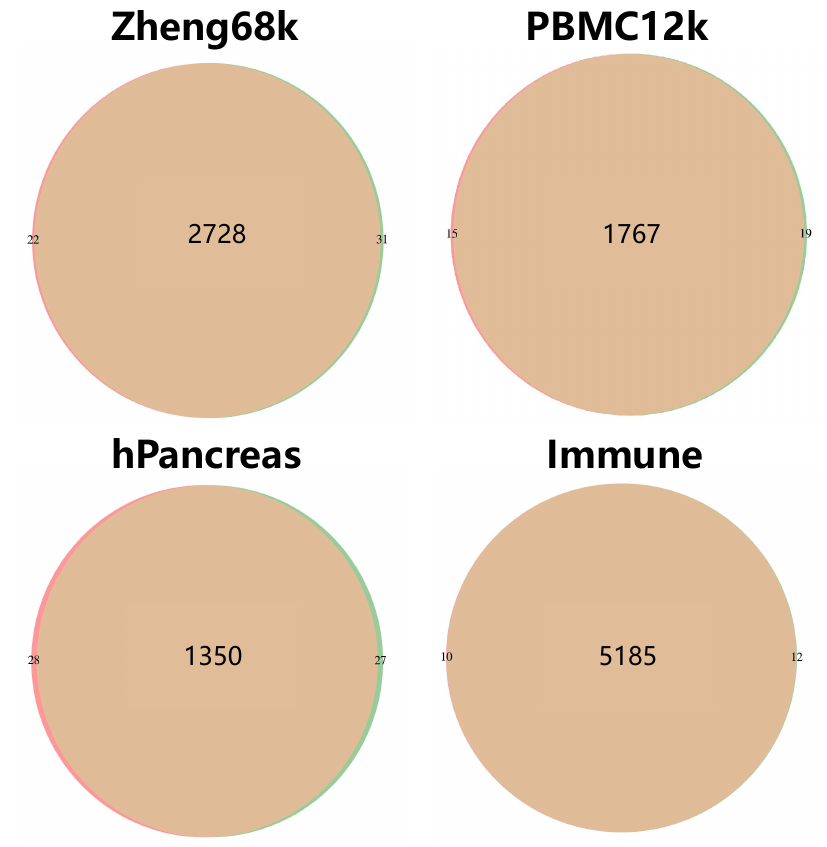}
    \caption{\textbf{Gene Space Reconstruction Overlap.} Venn diagrams illustrating the intersection of effectively reconstructed gene sets across Zheng68k, PBMC12k, hPancreas, and Immune datasets. The high intersection numbers demonstrate the model's ability to capture core transcriptomic features consistently across varying biological contexts.}
    \label{fig:app_recon}
\end{figure}
\begin{figure*}[t]
    \centering
    \includegraphics[width=\textwidth]{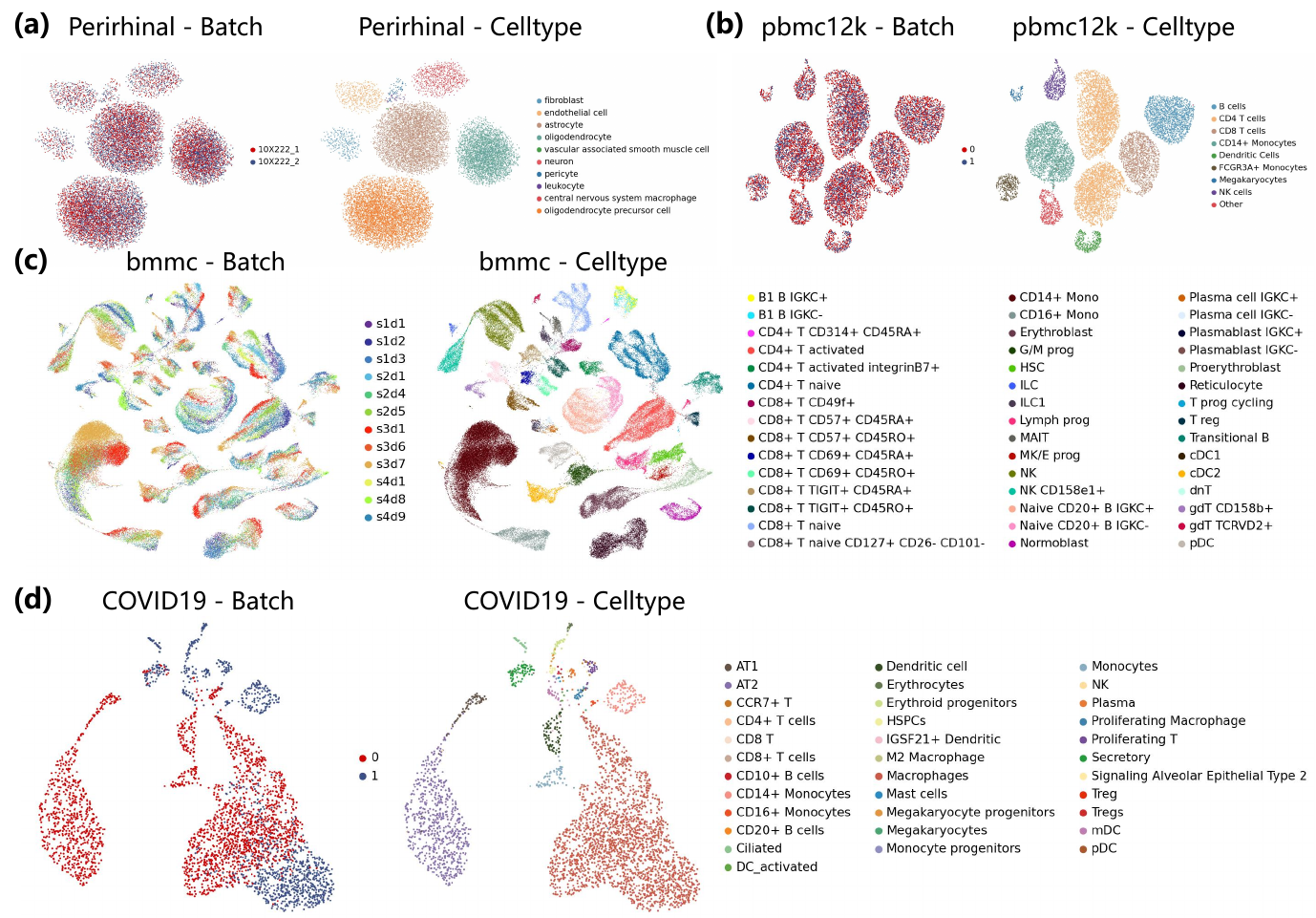}
    \caption{\textbf{Extended Multi-batch Integration Visualizations.} UMAP projections for (a) Perirhinal Cortex, (b) PBMC12k, (c) BMMC, and (d) COVID-19 datasets. Left columns display cells colored by batch ID, demonstrating effective mixing and removal of technical noise. Right columns display cells colored by cell type, confirming the preservation of biological identity and structural heterogeneity.}
    \label{fig:app_batch}
\end{figure*}

\subsection{Perturbation Metric}
\label{app:perturb_metrics}

Let $\Delta_j$ be the ground-truth perturbation-induced change for gene $j$ (e.g., mean treated minus mean control), and $\widehat\Delta_j$ the model prediction.
Define signal-sensitive weights proportional to gene expression (or magnitude of change), e.g.
\begin{equation}
w_j = \frac{\bar v_j + \alpha}{\sum_{k=1}^{G}(\bar v_k+\alpha)},
\label{eq:wmse_weights}
\end{equation}
where $\bar v_j$ is an average expression statistic (e.g., control mean) and $\alpha>0$ stabilizes low-expression genes.
The weighted MSE is
\begin{equation}
\mathrm{WMSE} = \sum_{j=1}^{G} w_j (\widehat\Delta_j - \Delta_j)^2.
\label{eq:wmse}
\end{equation}

\section{Additional Experimental Results}
\label{app:additional_results}

\textbf{Note on Inference Sampling:} Due to the high computational cost associated with multi-step diffusion sampling on large-scale benchmarks, all experimental results reported in this section are derived from single-step generation (i.e., direct prediction from the masked state), unless explicitly stated otherwise.

In this appendix, we provide extended qualitative and quantitative analyses to support the findings presented in the main text. We organize these results by the downstream tasks: Gene Reconstruction, Multi-batch Integration, Cell Type Annotation, Perturbation Prediction, and Gene Regulatory Network Inference.
\subsection{Extended Gene Reconstruction Analysis}
\label{app:gene_recon}

To further validate the generative capacity of scDiVa, we analyzed the intersection of reconstructed features across diverse datasets. Figure \ref{fig:app_recon} displays Venn diagrams illustrating the overlap of highly variable genes effectively reconstructed by the model across the Zheng68k, PBMC12k, hPancreas, and Immune datasets.

The substantial overlap indicates that scDiVa captures universal transcriptomic dependencies that are invariant across tissues. Specifically, the model maintains high fidelity not only for dataset-specific markers but also for shared housekeeping modules. This suggests that the Rank-Value reconstruction objective successfully prevents mode collapse and ensures that the generated expression profiles respect the underlying biological manifold of distinct datasets.
\begin{figure*}[t]
    \centering
    \includegraphics[width=\textwidth]{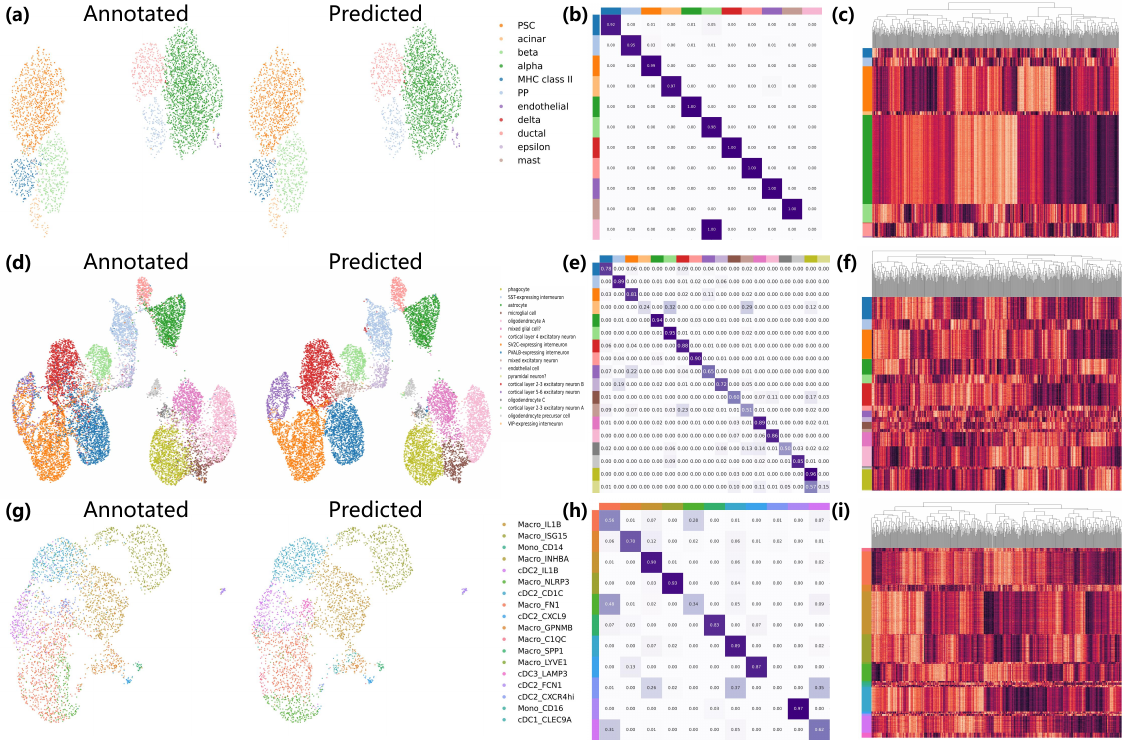}
    \caption{\textbf{Detailed Cell Type Annotation Analysis.} (a, d, g) UMAP comparisons of Ground Truth (Annotated) versus scDiVa predictions. (b, e, h) Confusion matrices showing classification accuracy, where the diagonal represents correct predictions. (c, f, i) Hierarchical clustering heatmaps of the learned features, organized by predicted cell types.}
    \label{fig:app_annot}
\end{figure*}
\subsection{Extended Multi-batch Integration}
\label{app:batch_viz}

While the main text focuses on the Immune dataset, we evaluated scDiVa's integration performance on four additional challenging scenarios: Perirhinal Cortex (brain tissue), PBMC12k (blood), BMMC (bone marrow), and COVID-19 (pathological lung tissue). 

As shown in Figure \ref{fig:app_batch}, scDiVa achieves a robust balance between batch mixing and biological conservation across all scenarios.

For the \textbf{Perirhinal Cortex} and \textbf{PBMC12k} datasets (Panels a, b), the UMAP projections colored by 'Batch' show thorough intermingling of samples (red and blue dots), indicating effective removal of technical artifacts. Conversely, the 'Cell type' plots reveal distinct, compact clusters, preserving the separation between neuronal subtypes and immune populations.

In the \textbf{BMMC} dataset (Panel c), which contains complex developmental trajectories, scDiVa preserves the continuum of differentiation (e.g., from HSCs to erythroid progenitors) while merging samples from different donors. This is evidenced by the unified trajectory in the batch plot and gradient separation in the cell type plot.

Finally, for \textbf{COVID-19} (Panel d), the dataset introduces severe pathological heterogeneity. Despite strong disease-induced shifts, scDiVa successfully aligns the control and disease batches without erasing the disease-specific cell states, such as activated macrophages and T cells.

\subsection{Detailed Cell Type Annotation}
\label{app:cell_annot}

To provide granular insight into the classification performance, we visualize the confusion matrices, predicted embeddings, and hierarchical heatmaps for representative fine-tuning tasks.

Figure \ref{fig:app_annot} presents these results across three distinct benchmarks. The comparison between ``Annotated" (Ground Truth) and ``Predicted" UMAPs (Panels a, d, g) confirms that the model-learned latent space aligns closely with human-curated labels. 

The \textbf{Confusion Matrices} (Panels b, e, h) exhibit strong diagonal dominance, indicating high classification accuracy. Notably, off-diagonal errors are primarily concentrated among biologically related subtypes (e.g., different excitatory neuron layers in Panel e), reflecting the subtle transcriptomic differences rather than model failure.

Furthermore, the \textbf{Hierarchical Heatmaps} (Panels c, f, i) illustrate the distinct expression signatures learned for each predicted class. The block-diagonal structure confirms that scDiVa extracts discriminative features that are consistent within cell types and distinct between them.

\subsection{Perturbation Prediction Hit Rate}
\label{app:perturb}

Beyond the correlation and MSE metrics reported in the main text, predicting the exact set of differentially expressed (DE) genes is critical for experimental prioritization. Figure \ref{fig:app_hitrate} illustrates the Hit Rate at top-$k$ for the Norman dataset. 

The Hit Rate metric quantifies the proportion of the true top-$k$ most responsive genes that are correctly identified by the model. scDiVa consistently outperforms baseline methods across varying $k$ values. This suggests that even when the exact magnitude of expression change is challenging to predict, our model correctly ranks the genes most affected by the perturbation, providing valuable candidates for downstream wet-lab validation.

\begin{figure}[h]
    \centering
    \includegraphics[width=\columnwidth]{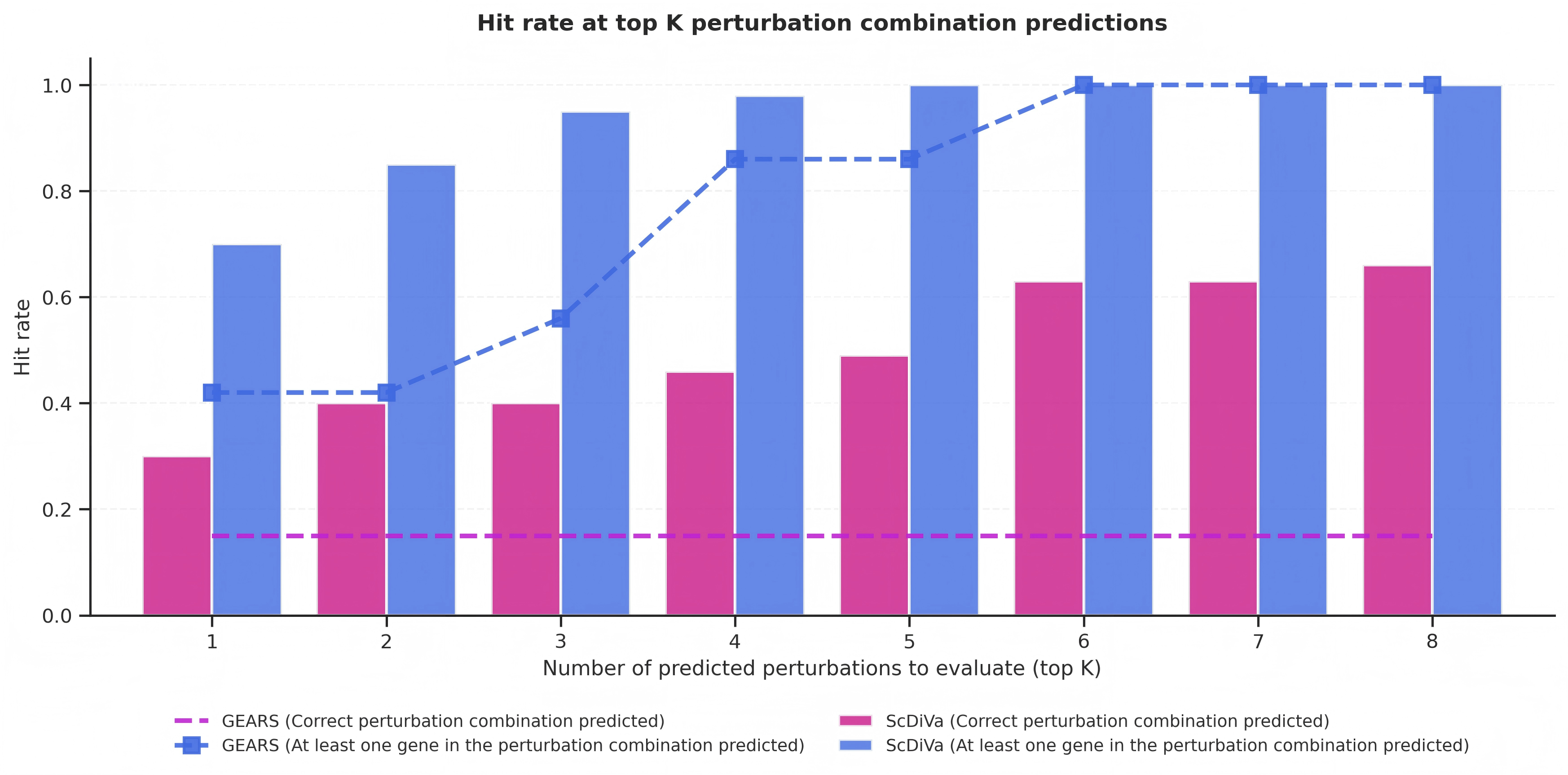}
    \caption{\textbf{Hit Rate at Top-$k$ for Perturbation Prediction.} Evaluation on the Norman dataset showing the proportion of true differentially expressed genes successfully retrieved by the model within the top $k$ predictions. scDiVa demonstrates superior ranking capability compared to baselines.}
    \label{fig:app_hitrate}
\end{figure}

\subsection{Inferred Gene Regulatory Networks}
\label{app:grn_viz}

The attention mechanisms within scDiVa allow for the extraction of global gene regulatory logic. In the main text, we highlighted the SPI1 regulon. Here, we extend this analysis to other critical biological pathways, verifying the model's ability to capture diverse regulatory motifs.

Figure \ref{fig:app_grn} displays the inferred gene regulatory networks for four distinct biological modules:
\vspace{-1mm}
\begin{itemize}[left=-1.5pt]
    \item[] \textbf{(a) TNF Superfamily:} The model recovers ligand-receptor pairs, specifically the interaction between TNFSF13B and TNFRSF13B.
    \item[] \textbf{(b) Interleukin Family (IL-1):} The graph exhibits high connectivity within the IL-1 family, consistent with the coordinated expression patterns of pro-inflammatory cytokines.
    \item[] \textbf{(c) CD Markers:} We observe distinct co-expression clusters corresponding to cell-surface markers.
    \item[] \textbf{(d) HLA Complex:} The attention mechanism identifies the dense correlation structure among MHC Class II genes (HLA-DR, -DQ, -DP).
\end{itemize}
Edge weights represent raw attention scores, demonstrating that the model captures known co-regulation patterns and protein-protein associations directly from the transcriptomic input.

\begin{figure}[t]
    \centering
    \includegraphics[width=\columnwidth]{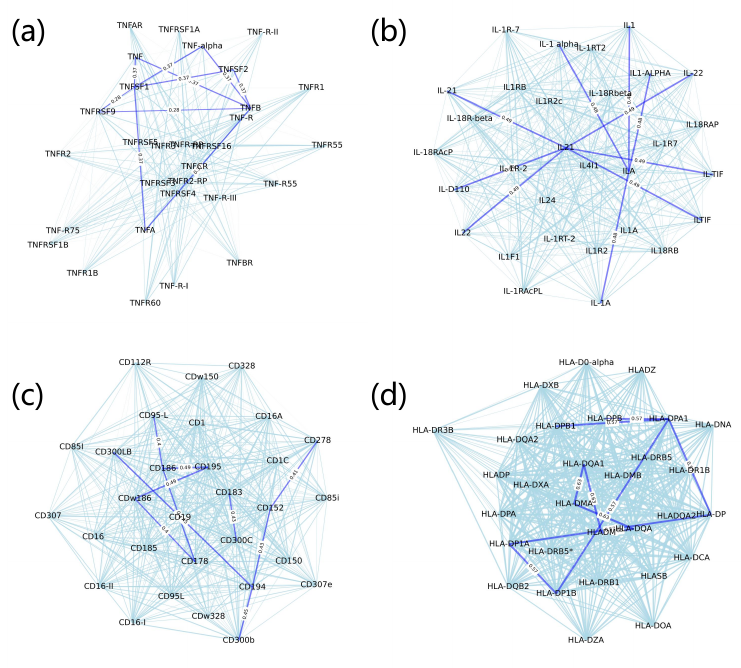}
    \caption{\textbf{Extended Gene Regulatory Network Inference.} Network graphs constructed from scDiVa attention weights for specific gene families: (a) TNF superfamily, (b) Interleukins (IL-1 family), (c) CD surface markers, and (d) HLA complex. Thicker edges indicate stronger regulatory attention weights, revealing biologically validated co-expression modules.}
    \label{fig:app_grn}
\end{figure}

\end{document}